\def\Rset{\mathbb{R}}
\newcommand{\disc}{\text{disc}}
\newcommand{\indic}{1}
\newcommand{\tcO}{\tilde{\mathcal{O}}}
\DeclareMathOperator*{\E}{\mathbb E}
\DeclareMathOperator*{\argmax}{argmax}
\DeclareMathOperator*{\argmin}{argmin}
\newcommand{\conf}[1]{}
\newcommand{\arxiv}[1]{#1}
\newtheorem*{rep@theorem}{\rep@title}
\newcommand{\newreptheorem}[2]{%
\newenvironment{rep#1}[1]{%
 \def\rep@title{#2 \ref{##1}}%
 \begin{rep@theorem}}%
 {\end{rep@theorem}}}
\newcommand{\cH}{\mathcal{H}}
\newcommand{\cO}{\mathcal{O}}
\newcommand{\cX}{\mathcal{X}}
\newcommand{\cY}{\mathcal{Y}}
\newcommand{\sC}{{\mathscr C}}
\newcommand{\sD}{{\mathscr D}}
\newcommand{\sH}{{\mathscr H}}
\newcommand{\sL}{{\mathscr L}}
\newcommand{\sU}{{\mathscr U}}
\newcommand{\sX}{{\mathscr X}}
\newcommand{\sY}{{\mathscr Y}}
\newcommand{\bm}{{\mathbf m}}
\newcommand{\R}{{\mathfrak R}}
\newcommand{\s}{{\mathfrak s}}
\newcommand{\h}{\widehat}
\newcommand{\ignore}[1]{}
\title{Three Approaches for Personalization with Applications to Federated  Learning}
\begin{document}

\maketitle

\begin{abstract}
The standard objective in machine learning is to train a single model for all users. However, in many learning scenarios, such as cloud computing and federated learning, it is possible to learn a personalized model per user. In this work, we present a systematic learning-theoretic study of personalization. We propose and analyze three approaches: user clustering, data interpolation, and model interpolation. For all three approaches, we provide learning-theoretic guarantees and efficient algorithms for which we also demonstrate the performance empirically. All of our algorithms are model-agnostic and work for any hypothesis class.
\end{abstract}

\section{Introduction}

A popular application of language models is virtual keyboard
applications, where the goal is to predict the next word, given the
previous words~\citep{hard2018federated}. For example, given ``I live in the state of", ideally, it should guess the state the
user intended to type. However, suppose we train a single model on all
the user data and deploy it, then the model would predict the same
state for all users and would not be a good model for most.  Similarly, in many practical applications, the distribution of
data across clients is highly non-i.i.d.\ and training a
single global model for all clients may not be optimal.

 Thus, we study the problem of learning \emph{personalized models},
 where the goal is to train a model for each client, based on the
 client's own dataset and the datasets of other clients. Such an
 approach would be useful in applications with the natural
 infrastructure to deploy a personalized model for each client, which
 is the case with large-scale learning scenarios such as
 \emph{federated learning} (FL)~\citep{mcmahan2017communication}.
 
 Before we proceed further, we highlight one of our use cases in FL.
 In FL, typically a centralized global model is trained based on data from a large number of clients, which may be mobile
 phones, other mobile devices, or sensors
 \citep{konevcny2016federated,konecny2016federated2,mcmahan2017communication,
 yang2019federated} using a variant of
 stochastic gradient descent called \emph{FedAvg}.  This global model
 benefits from having access to client data and can often
 perform better on several learning
 problems, including next word prediction \citep{hard2018federated,
   yang2018applied} and predictive models in health
 \citep{brisimi2018federated}. We refer to
 Appendix~\ref{app:federated} for more details on FL.

Personalization of machine learning models has been studied extensively for specific applications such as speech recognition~\citep{yu2017recent}. However, many algorithms are speech specific or not suitable for FL due to distributed constraints. Personalization is also related to Hierarchical Bayesian models~\citep{gelman2006multilevel, allenby2005hierarchical}. However, they are not directly applicable for FL. Personalization in the context of FL has been studied by several works via multi-task learning \citep{smith2017federated}, meta-learning~\citep{jiang2019improving, khodak2019adaptive}, use of local parameters~\citep{arivazhagan2019federated, liang2020think}, mixture of experts~\citep{peterson2019private}, finetuning and variants~\citep{wang2019federated, yu2020salvaging} among others. We refer readers to Appendix~\ref{sec:related} for an overview of works on personalization in FL.
 
We provide a learning-theoretic framework, generalization guarantees,
and computationally efficient algorithms for personalization. Since
FL is one of the main frameworks where personalized models can be used, we propose efficient algorithms that take into account
computation and communication bottlenecks.
\section{Preliminaries}
\label{sec:when}

Before describing the mathematical details of personalization, we
highlight two related models. The first one is the global model trained on data from all the clients. This can be trained
using either standard empirical risk
minimization~\citep{vapnik1992principles} or other methods such as
agnostic risk minimization \citep{mohri2019agnostic}.  The second
baseline model is the purely local model trained only on the
client's data.

The global model is trained on large amounts of data and
generalizes well on unseen test data; however it does not perform well for clients whose
data distributions are very different from the global train data distribution. On
the other hand, the train data distributions of local models match the ones at inference time, but they do not generalize well due to the scarcity of data.

Personalized models can be viewed as intermediate models between pure-local
and global models. Thus, the hope is that they incorporate the
generalization properties of the global model and the distribution matching
property of the local model. Before we proceed further,
we first introduce the notation used in the rest of the paper.

\subsection{Notation}

We start with some general notation and definitions used throughout
the paper. Let $\sX$ denote the input space and $\sY$ the output
space. We will primarily discuss a multi-class classification problem
where $\sY$ is a finite set of classes, but much of our results can be
extended straightforwardly to regression and other problems.  The
hypotheses we consider are of the form $h\colon \sX \to \Delta^\sY$,
where $\Delta^\sY$ stands for the simplex over $\sY$. Thus, $h(x)$ is
a probability distribution over the classes that can be
assigned to $x \in \sX$. We will denote by $\sH$ a family of such
hypotheses $h$.  We also denote by $\ell$ a loss function defined over
$\Delta^\sY \times \sY$ and taking non-negative values. The loss of $h
\in \sH$ for a labeled sample $(x, y) \in \sX \times \sY$ is given by
$\ell(h(x), y)$. Without loss of generality, we assume
  that the loss $\ell$ is bounded by one. 
  We will denote by $\sL_\sD(h)$ the expected loss of a
  hypothesis $h$ with respect to a distribution $\sD$ over $\sX \times
\sY$:
\vspace{-0.5ex}
\[
\sL_\sD(h) = \E_{(x, y) \sim \sD} [\ell(h(x), y)],
\]
and by $h_\sD$ its minimizer: $h_\sD = \argmin_{h \in \sH}
\sL_\sD(h)$. Let $\R_{{\sD}, m}(\sH)$ denote the Rademacher
complexity of class $\sH$ over the distribution ${\sD}$ with $m$
samples.

Let $p$ be the number of clients.  The distribution of samples of
client $k$ is denoted by $\sD_k$. Clients do not know the true
distribution, but instead, have access to $m_k$ samples drawn i.i.d.\ 
from the distribution $\sD_k$. We will denote by $\h{\sD}_k$ the
corresponding empirical distribution of samples and by $m = \sum^p_{k=1} m_k$
the total number of samples.

\subsection{Local model}
\label{sec:local}
We first ask when it is beneficial for a client to
participate in global model training. Consider a canonical user with distribution
$\sD_1$.  Suppose we train a purely local model based on the client's
data and obtain a model $h_{\h{\sD}_1}$.  By standard learning-theoretic
tools~\citep{mohri2018foundations}, the performance of this
model can be bounded as follows: with probability at least $1- \delta$,
the minimizer of empirical risk $\sL_{\h{\sD}_1}(h)$ satisfies
\vspace{-1ex}
\begin{equation}
    \label{eq:local}
\sL_{\sD_1}(h_{\h{\sD}_1}) - \sL_{\sD_1}(h_{\sD_1})=  \cO \left(\R_{\sD_1, m_1}(\sH)\right) = 
 \cO
\left(\frac{\sqrt{d + \log {1}/{\delta}}}{\sqrt{m_1}}
\right),
\end{equation}
where $\R_{\sD_1, m_1}(\sH)$ is the Rademacher complexity and 
 $d$ is the pseudo-dimension of the hypothesis class $\sH$~\citep{mohri2018foundations}. Note that
pseudo-dimension coincides with VC dimension for $0-1$ loss.
From~\eqref{eq:local}, it is
clear that local models perform well when the number of samples $m_1$
is large. However, this is often not the case. In many realistic
settings, such as virtual keyboard models, the average number of
samples per user is in the order of hundreds, whereas the pseudo-dimension
of the hypothesis class is in millions~\citep{hard2018federated}. In
such cases, the above bound becomes vacuous.

\subsection{Uniform global model}
\label{sec:uniform_global}
The global model is trained by minimizing the empirical risk on the
concatenation of all the samples. For $\lambda \in \Delta^p$, the weighted average
distribution $\sD_\lambda$ is given by $\sum_{k} \lambda_k \sD_k$. The
global model is trained on the concatenated samples
from all the users and hence is equivalent to minimize the loss on the
distribution $\hat{\sU} = \sum_{k} \lambda'_k \h{\sD}_k$, where
$\lambda'_k = m_k/m$. Since the global model is trained on data
from all the clients, it may not match the actual underlying client
distribution and thus may perform worse.

The divergence between distributions is often measured by a Bregman
divergence such as KL-divergence or unnormalized relative entropy. 
However, such divergences do not consider the
underlying machine learning task at hand for example learning the best hypotheses out of $\sH$. To obtain better bounds, we use the notion of
label-discrepancy between distributions~\citep{mansour2009domain, mohri2012new}.  For
two distributions over features and labels, $\sD_1$ and $\sD_2$, and a
class of hypotheses $\cH$, label-discrepancy\citep{mohri2012new} is given by
\vspace{-0.5ex}
\[
\disc_{\cH}(\sD_1, \sD_2) = \max_{h\in \cH} | \sL_{\sD_1}(h) -
\sL_{\sD_2}(h)|.
\]
If the loss of all the hypotheses in the class is the same under both
$\sD_1$ and $\sD_2$, then the discrepancy is zero and models trained
on $\sD_1$ generalize well on $\sD_2$ and vice versa. 

With the above definitions, it can be shown that the uniform global
model generalizes as follows: with probability at least $1- \delta$, the
minimizer of empirical risk on the uniform distribution satisfies
\begin{equation}
    \label{eq:global}
    \sL_{\sD_1}(h_{\hat \sU}) - \sL_{\sD_1}(h_{\sD_1})
   = \cO \left(\R_{\sU, m}(\sH)\right) + 
\disc_{\cH}(\sD_1, \sU) =
 \cO \left( \frac{\sqrt{d + \log {1}/{\delta}}}{\sqrt{m}} \right) + 
\disc_{\cH}(\sD_1, \sU) .
\end{equation}
Since the global model is trained on the concatenation of all users' data,
it generalizes well. However, due to the distribution mismatch, the model
may not perform well for a specific user. If ${\sU} = \sum_{k} \lambda'_k {\sD}_k$,
the
difference between local and global models depends on the discrepancy
between  $\sD_1$ and $\sU$, $m_1$ the number of
samples from the domain $\sD_1$, and the total number of samples
$m$. While in most practical applications $m_1$ is small and
hence a global model usually performs better, this is not guaranteed. We provide a simple example illustrating such a case in Appendix~\ref{app:example}.

Since the uniform global model assigns weight $m_k/m$ to client $k$,
clients with larger numbers of samples receive higher importance. This can
adversely affect clients with small amounts of data. Furthermore,
by~\eqref{eq:global}, the model may not generalize well for
clients whose distribution is different than the uniform distribution.
Thus,~\eqref{eq:local} and~\eqref{eq:global} give some guidelines 
under which it is beneficial for clients to participate in global model training.

Instead of using uniform weighting of samples, one can use agnostic risk proposed by~\cite{mohri2019agnostic},
which is more risk averse. We refer to Appendix~\ref{app:agnostic} for details
about the agnostic risk minimization.

\section{Our contributions}
\label{sec:theory}

We ask if personalization can be achieved by an intermediate
model between the local and global models. Furthermore, for ease of applicability and to satisfy the communication constraints in FL, we focus on scalable algorithms with low communication bottleneck. This gives rise to three
natural algorithms, which are orthogonal and can be used separately or
together.
\begin{itemize}
    \item Train a model for subsets of users: we can cluster users
      into groups and train a model for each group. We refer to this
      as \emph{user clustering}, or more refinely
      \emph{hypothesis-based clustering}.
    \item Train a model on interpolated data: we can combine the
      local and global data and train a model on their combination. We
      refer to this as \emph{data interpolation}.
    \item Combine local and global models: we can train a local and a
      global model and use their combination. We refer to this as
      \emph{model interpolation}.
\end{itemize}

We provide generalization bounds and communication-efficient algorithms for all of the above methods. 
We show that the above three methods has small communication bottleneck and enjoys qualitative privacy benefits similar to training a global model.
Of the three proposed approaches, data interpolation has non-trivial communication cost and data security.
We show that data interpolation can be implemented with small communication overhead in Section~\ref{sec:dapper}. We also show discuss data security aspect and methods to improve it in Appendix~\ref{sec:practical}.

Of the remaining methods, model interpolation has the same communication cost and security as that of training a single model. Clustering has the same data security as that of training single models, but the communication cost is $q$ times that of training a single model, where $q$ is the number of clusters. In the rest of the paper, we study each of the above methods.

\section{User clustering}

Instead of training a single global model, a natural approach is to cluster clients into groups and train a model for each
group.  This is an intermediate model between a purely local and global model and provides a trade-off between generalization and
distribution mismatch.
If we have a clustering of users, then we can naturally find a model
for each user using standard optimization techniques. In this section,
we ask how to define clusters. Clustering is a classical problem with
a broad literature and known algorithms~\citep{jain2010data}.  We
argue that, since the subsequent application of our clustering is
known, incorporating it into the clustering algorithm will be
beneficial. We refer readers to  Appendix~\ref{app:cluster_baseline} for more details on comparison to baseline works.

\subsection{Hypothesis-based clustering}
 Consider the scenario where we are interested in finding
 clusters of images for a facial recognition task. Suppose we are
 interested in finding clusters of users for each gender and find
 a good model for each cluster. If we naively use the Bregman
 divergence clustering, it may focus on clustering based on the image
 background e.g., outdoor or indoors to find clusters instead of gender.

To overcome this, we propose to incorporate the
task at hand to obtain better clusters. We refer to this approach as
\emph{hypothesis-based clustering} and show that it admits better
generalization bounds than the Bregman divergence approach. We partition users into $q$ clusters and find the best hypothesis for
each cluster. In particular, we use the following optimization:
\begin{equation}
    \label{eq:cluster}
    \min_{h_1, \ldots, h_q} \sum^p_{k=1} \lambda_k \cdot \min_{i
      \in [q]} \sL_{\sD_k}(h_i),
\end{equation}
where $\lambda_k$ is the importance of client $k$.  The above loss
function trains $q$ best hypotheses and naturally divides $\cX \times
\cY$ into $q$ partitions, where each partition is associated with a
particular hypothesis $h_k$. In practice, we only have access to the empirical distributions
$\h{\sD}_k$.  We replace $\sL_{\sD_k}(h_i)$ by $\sL_{\h \sD_k}(h_i)$ in optimization. To simplify the analysis, we use the fraction of samples
from each user $m_k/m$ as $\lambda_k$. An alternative
approach is to use $\lambda_k = 1/p$ for all users, which assigns
equal weight to all clients. The analysis is similar and
we omit it to be concise.

\subsection{Generalization bounds}
We now analyze the generalization properties of this technique. We bound the maximum difference between true cluster based loss and empirical cluster based loss for all hypotheses. We note that such a generalization bound holds for any clustering algorithm (see Appendix~\ref{app:lem_cluster}).

 Let $C_1,C_2\ldots,C_q$ be
 the clusters and let $m_{C_i}$ be the number of samples from cluster
 $i$. Let $\sC_i$ and $\h{\sC}_i$ be the empirical and true
 distributions of cluster $\sC_i$. With these definitions, we now bound
 the generalization error of this technique.
 
\begin{theorem}[Appendix~\ref{app:cluster}]
\label{thm:cluster}
With probability at least $1-\delta$,
\begin{align*}
& \max_{h_1, \ldots, h_q} \left \lvert  \sum^p_{k=1} \frac{m_k}{m} \cdot \left( \min_{i \in [q]} \sL_{\sD_k}(h_i)  - 
\min_{i \in [q]} \sL_{\h{\sD}_k}(h_i) \right)\right \rvert \leq 
 2 \sqrt{\frac{p \log \frac{2q}{\delta}}{m}}
 + \max_{C_1, \ldots, C_q} \sum^q_{i=1} \frac{m_{C_i}}{m} \R_{\sC_i, m_{C_i}}(\sH).
\end{align*}
\end{theorem}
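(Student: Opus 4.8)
The plan is to exploit the fact that the innermost minimum $\min_{i\in[q]}\sL_{\sD_k}(h_i)$ implicitly assigns each client $k$ to the hypothesis that is best for it, i.e.\ it induces a \emph{clustering} of the clients. Fixing $h_1,\dots,h_q$, I would first prove the elementary reduction
\[
\left|\sum_{k=1}^p \tfrac{m_k}{m}\Bigl(\min_{i}\sL_{\sD_k}(h_i)-\min_{i}\sL_{\h\sD_k}(h_i)\Bigr)\right|
\le \max_{c:[p]\to[q]}\left|\sum_{k=1}^p \tfrac{m_k}{m}\Bigl(\sL_{\sD_k}(h_{c(k)})-\sL_{\h\sD_k}(h_{c(k)})\Bigr)\right|,
\]
which holds because choosing $c(k)\in\argmin_i \sL_{\h\sD_k}(h_i)$ upper bounds the signed sum while $c(k)\in\argmin_i \sL_{\sD_k}(h_i)$ lower bounds it. An assignment $c$ is exactly a clustering $C_i=c^{-1}(i)$, and regrouping the sum by cluster turns the inner quantity into $\sum_{i=1}^q \tfrac{m_{C_i}}{m}\bigl(\sL_{\sC_i}(h_i)-\sL_{\h\sC_i}(h_i)\bigr)$, where $\sC_i$ is the $m_k$-weighted mixture of the client distributions in cluster $i$ and the $m_{C_i}$ samples of the cluster are pooled. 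This is the step that lets the per-cluster sample sizes $m_{C_i}$ appear in the final Rademacher term rather than the much smaller per-client sizes.

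Next I would take the supremum over $h_1,\dots,h_q$ inside a fixed clustering. Since the hypotheses attached to distinct clusters are independent, the supremum distributes over the sum, giving $\sum_i \tfrac{m_{C_i}}{m}\sup_{h\in\sH}\bigl(\sL_{\sC_i}(h)-\sL_{\h\sC_i}(h)\bigr)$. For this fixed clustering I would (i) bound the expectation of each cluster's supremum by the Rademacher complexity $\R_{\sC_i,m_{C_i}}(\sH)$ via the standard symmetrization argument, and (ii) control the deviation of the whole weighted sum from its expectation by McDiarmid's inequality. The key quantitative observation is that a single training example belongs to exactly one cluster $C_i$, enters only $\sL_{\h\sC_i}$ with weight $1/m_{C_i}$, and is re-weighted by $m_{C_i}/m$ in the outer sum, so its total McDiarmid coefficient is exactly $1/m$; summing over the $m$ examples yields a deviation of order $\sqrt{\log(1/\delta')/m}$.

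Finally I would remove the dependence on the fixed clustering by a union bound over all $c:[p]\to[q]$ (with a factor $2$ for the two signs of the absolute value). There are at most $q^p$ such assignments, so taking $\delta'=\delta/(2q^p)$ contributes a $\log(2q^p/\delta)=p\log q+\log(2/\delta)$ term under the square root, which is dominated by the stated $2\sqrt{p\log(2q/\delta)/m}$; the same union bound replaces the expectation by $\max_{C_1,\dots,C_q}\sum_i \tfrac{m_{C_i}}{m}\R_{\sC_i,m_{C_i}}(\sH)$. Alternatively, applying Hoeffding per client with a union over the $2q$ hypothesis/sign choices and combining the resulting $\sqrt{\log(2q/\delta)/m_k}$ terms through Cauchy--Schwarz, using $\sum_k\sqrt{m_k}\le\sqrt{pm}$, reproduces the constant in the statement exactly.

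The main obstacle is controlling the interaction between the discrete clustering and the continuous hypothesis class: the minimum over $q$ hypotheses is what forces the combinatorial $q^p$ count, and hence the $p\log q$ dependence, into the bound, while the Rademacher term must still see the pooled per-cluster samples. The delicate point is to verify that, despite the exponentially many clusterings, the McDiarmid coefficients stay at $1/m$, so that the union bound costs only $\sqrt{p\log q/m}$ rather than something that degrades multiplicatively with the number of clusters; this is exactly what the per-cluster pooling in the first step guarantees.
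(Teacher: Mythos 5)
Your proposal follows essentially the same route as the paper's proof: reduce the difference of minima to a maximum over the $q^p$ client-to-cluster assignments, regroup each assignment into pooled per-cluster losses so the supremum decouples across clusters, apply McDiarmid's inequality (with bounded-difference coefficient $1/m$) together with a union bound over assignments and the two signs, and bound the expectation by the per-cluster Rademacher complexities $\R_{\sC_i,m_{C_i}}(\sH)$. The only shaky part is your closing ``alternatively'' remark---a per-client Hoeffding bound with a union over ``$2q$ hypothesis/sign choices'' cannot handle an infinite hypothesis class---but since that is offered only as an aside about constants, your main argument matches the paper's and is correct.
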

The above result implies the following corollary, which is easier to interpret.
\begin{corollary}[Appendix~\ref{app:cor_cluster}]
\label{cor:cluster}
Let $d$ be the pseudo-dimension of $\sH$. Then with probability at least
$1-\delta$, the following holds:
\begin{align*}
    \conf{&} \max_{h_1, \ldots, h_q} \left \lvert \sum^p_{k=1}
    \frac{m_k}{m} \cdot \left(\min_{i \in [q]} \sL_{\sD_k}(h_i) -
    \min_{i \in [q]}
    \sL_{\h{\sD}_k}(h_i)\right) \right \rvert 
    \leq 
    \sqrt{\frac{4p \log \frac{2q}{\delta}}{m}} + \sqrt{\frac{dq}{m}\log
      \frac{em}{d}}.
\end{align*}
\end{corollary}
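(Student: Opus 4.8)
The plan is to deduce the corollary directly from Theorem~\ref{thm:cluster} by replacing the Rademacher-complexity term with an explicit bound in terms of the pseudo-dimension $d$. The first summand requires no work: since $2 = \sqrt{4}$, the term $2\sqrt{p\log(2q/\delta)/m}$ already equals $\sqrt{4p\log(2q/\delta)/m}$, which is exactly the first term in the corollary. Hence it suffices to show that the second term of Theorem~\ref{thm:cluster} is bounded by $\sqrt{(dq/m)\log(em/d)}$, uniformly over all clusterings $C_1,\ldots,C_q$.

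To this end, I would invoke the standard bound relating Rademacher complexity to pseudo-dimension (e.g.\ via Massart's lemma together with a Sauer--Shelah growth-function argument, as in \citet{mohri2018foundations}): for a class of pseudo-dimension $d$ evaluated on $n$ samples, $\R_{\sC_i,n}(\sH) = \cO(\sqrt{(d/n)\log(en/d)})$. Applying this with $n = m_{C_i}$ to each cluster and multiplying by the weight $m_{C_i}/m$ gives
\[
\sum_{i=1}^q \frac{m_{C_i}}{m}\,\R_{\sC_i,m_{C_i}}(\sH)
= \cO\!\left(\frac{1}{m}\sum_{i=1}^q \sqrt{d\,m_{C_i}\log\frac{e m_{C_i}}{d}}\right).
\]
Using $m_{C_i}\le m$ to pull the logarithm out as $\log(em/d)$, and then Cauchy--Schwarz together with $\sum_i m_{C_i}=m$, I would bound $\sum_i \sqrt{m_{C_i}} \le \sqrt{q\sum_i m_{C_i}} = \sqrt{qm}$, which yields the target expression $\sqrt{(dq/m)\log(em/d)}$. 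Since this chain of inequalities holds for every fixed clustering, the maximum over $C_1,\ldots,C_q$ appearing in Theorem~\ref{thm:cluster} is controlled by the same bound.

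The only delicate point is the bookkeeping of constants: the multiplicative constant in the pseudo-dimension bound for Rademacher complexity must be tracked so that it is absorbed into the clean form $\sqrt{(dq/m)\log(em/d)}$ stated in the corollary. If the version used carries a constant strictly larger than one (as Massart's lemma typically does), that constant simply propagates through the Cauchy--Schwarz step and should either be reflected in the statement or read into the asymptotic notation. I expect no conceptual obstacle here --- the substance of the result lies entirely in Theorem~\ref{thm:cluster}, and the corollary is a routine simplification driven by the convexity-type inequality $\sum_i \sqrt{m_{C_i}} \le \sqrt{qm}$.
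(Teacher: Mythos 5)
Your proposal follows essentially the same route as the paper's own proof in Appendix~\ref{app:cor_cluster}: bound each per-cluster Rademacher complexity via the pseudo-dimension, enlarge the logarithm using $m_{C_i}\le m$, and apply $\sum_{i=1}^q\sqrt{m_{C_i}}\le\sqrt{qm}$ (your Cauchy--Schwarz step is exactly the paper's ``Jensen's inequality'' step). Your caveat about tracking the multiplicative constant is fair --- the paper's proof also silently takes that constant to be $1$ --- but it does not alter the argument.
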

The above learning bound can be understood as follows. For good
generalization, the average number of samples per user $m/p$ should be
larger than the logarithm of the number of clusters, and the average
number of samples per cluster $m/q$ should be larger than the
pseudo-dimension of the overall model. Somewhat surprisingly, these
results do not depend on the minimum number of samples per clients and instead depend only on the average statistics.

To make a comparison between the local performance~\eqref{eq:local} and 
the global model 
performance~\eqref{eq:global}, observe that 
combining~\eqref{eq:2} and  Corollary~\ref{cor:cluster} together with the 
definition of discrepancy yields
\begin{align*}
        \conf{&} 
\sum^p_{k=1} \frac{m_k}{m} \cdot\left( \sL_{\sD_k}(\h{h}_{f(k)})  
-  \min_{h \in H} \sL_{\sD_k}(h) \right)
      \conf{\\ &} 
      \leq 
 2 \sqrt{\frac{p \log \frac{2q}{\delta}}{m}}
 + \sqrt{\frac{dqe}{m}\log \frac{m}{d}} + \sum^p_{k=1} \frac{m_k}{m} \disc(\sD_k, \sC_{f(k)}),
\end{align*}
where $f \colon [p] \to [q]$ is the mapping from users to clusters.
  Thus, the generalization bound is in between that of the local and
  global model.  For $q=1$, it yields the global model, and for $q=p$, it
  yields the local model.  As we increase $q$, the generalization
  decreases and the discrepancy term gets smaller. Allowing a general
  $q$ lets us choose the best clustering scheme and provides a smooth
  trade-off between the generalization and the distribution
  matching. In practice, we choose small values of $q > 1$. We further
  note that we are not restricted to using the same value of $q$ for
  all clients. We can find clusters for several values of $q$ and use
  the best one for each client separately using a hold-out set of samples.

\subsection{Algorithm : \textsc{HypCluster}}

\begin{figure}[t]
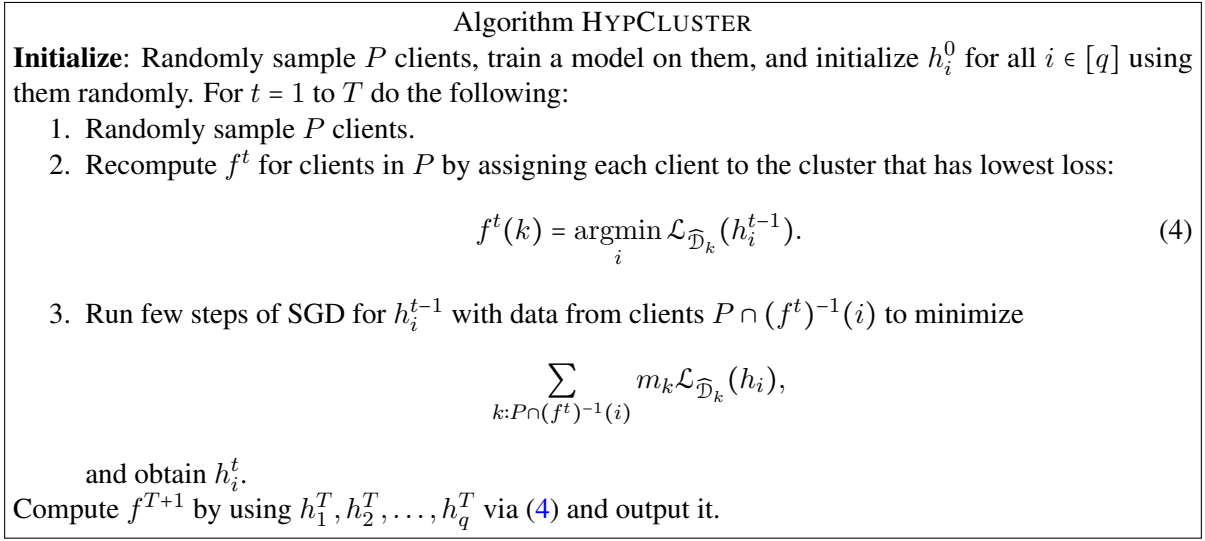

\begin{center}
\fbox{\begin{minipage}{\conf{0.45}\arxiv{0.95}\textwidth}
\begin{center}
Algorithm \textsc{HypCluster}
\end{center}
\textbf{Initialize}: Randomly sample $P$ clients, train a model on
them, and initialize $h^0_i$ for all $i \in [q]$ using them randomly.
For $t = 1$ to $T$ do the following:
\begin{enumerate}
    \item Randomly sample $P$ clients.
        \item Recompute $f^t$ for clients in $P$ by assigning each
          client to the cluster that has lowest loss:
    \begin{equation}
\label{eq:maximization}
    f^t(k) = \argmin_{i} \sL_{\h{\sD}_k}(h^{t-1}_i).
    \end{equation}
    \item Run few steps of SGD for $h^{t-1}_i$ with data from
      clients $P \cap (f^{t})^{-1}(i)$ to minimize
    \[
 \sum_{k : P \cap (f^{t})^{-1}(i)} m_k \sL_{\h{\sD}_k}(h_i),
    \]
    and obtain $h^t_i$.
\end{enumerate}
Compute $f^{T+1}$ by using $h^T_1, h^T_2,\ldots, h^T_{q}$
via~\eqref{eq:maximization} and output it.
\end{minipage}}
\end{center}
\caption{Pseudocode for \textsc{HypCluster} algorithm.}
\end{figure}

We provide an \emph{expectation-maximization} (EM)-type algorithm for
finding clusters and hypotheses. A naive EM modification may require
heavy computation and communication resources. To overcome this, we
propose a stochastic EM algorithm in \textsc{HypCluster}.
In the algorithm, we denote clusters via a mapping $f \colon [p] \to
[q]$, where $f(k)$ denotes the cluster of client $k$. Similar to 
$k$-means,
\textsc{HypCluster} is not guaranteed to converge to the true optimum,
but, as stated in the beginning of the previous section, the
generalization guarantee of Theorem~\ref{thm:cluster} still holds
here.

\section{Data interpolation}
\label{sec:dapper}
From the point of view of client $k$, there is a small amount of data
with distribution ${\sD_k}$ and a large amount of data from the global
or clustered distribution ${\sC}$. How are we to use auxiliary data from
$\sC$ to improve the model accuracy on $\sD_k$?  This relates the
problem of personalization to domain adaptation.  In domain
adaptation, there is a single source distribution, which is the global
data or the cluster data, and a single target distribution, which is the local client data. As in domain adaptation with target
labels~\citep{blitzer2008learning}, we have at our disposal a large
amount of labeled data from the source (global data) and a small
amount of labeled data from the target (personal data). We propose to
minimize the loss on the concatenated data,
\begin{equation}
    \label{eq:data}
\lambda \cdot \sD_k + (1-\lambda) \cdot \sC,
\end{equation}
where $\lambda$ is a hyper-parameter and can be obtained by either 
cross validation or by using the generalization bounds
of~\citet{blitzer2008learning}. $\sC$ can either be the
uniform distribution $\sU$ or one of the distributions obtained via
clustering.

Personalization is different from most domain adaptation works as they
assume they only have access to unlabeled target data~\citep{MansourMohriRostamizadeh2009a,ganin2016domain, zhao2018adversarial}, whereas in personalization we have
access to labeled target data. Secondly, we have one target domain per
client, which makes our problem computationally expensive, which we
discuss next. Given the known learning-theoretic bounds, a natural
question is if we can efficiently estimate the best hypothesis for a
given $\lambda$.  However, note that naive approaches suffer from the
following drawbacks.  If we optimize for each client separately, the
time complexity of learning per client is $\cO(m)$ and the overall
time complexity is $\cO(m \cdot p)$.

In addition to the computation time, the algorithm also
admits a high communication cost in FL. This is
because, to train the model with a $\lambda$-weighted mixture requires
the client to admit access to the entire dataset $\h{\sC}$, which
incurs communication cost $\cO(m)$. One empirically popular approach
to overcome this is the fine-tuning approach, where the central model
is fine-tuned on the local data~\citep{wang2019federated}.  However,
to the best of our knowledge, there are no theoretical guarantees and
the algorithm may be prone to catastrophic
forgetting~\citep{goodfellow2013empirical}. In fine-tuning, the models are typically trained first on the global data and then on the client's local data. Hence, the order in which samples are seen are not random. Furthermore, we only care about the models' performance on the local data. Hence, one cannot directly use known online-to-batch conversion results from online learning to obtain theoretical guarantees.

We propose \textsc{Dapper}, a theoretically motivated and efficient algorithm to
overcome the above issues. 
 The algorithm first trains a central model on the overall empirical
 distribution $\h{\sC}$.  Then for each client, it subsamples
 $\h{\sC}$ to create a smaller dataset of size $\h{\sC'}$ of size $r
 \cdot m_k$, where $r$ is a constant. It then minimizes the loss on weighted
 combination of two datasets i.e., $ \lambda \sL_{\h{\sD}_k}(h) + (1-\lambda) \sL_{\h{\sC'}}(h)$
 for several
 values of $\lambda$. Finally, it chooses the best $\lambda$ using
 cross-validation.
 The
 algorithm is efficient both in terms of its communication complexity
 which is $r \cdot m_k$ and its computation time, which is at most $(r+1)\cdot
 m_k$. Hence, the overall communication and computation time is $\cO(r
 \cdot m)$. Due to space constraints, we relegate the pseudo-code of the algorithm to Appendix~\ref{app:dapper}.

We analyze \textsc{Dapper} when the loss function is strongly convex in the hypothesis parameters $h$ and show that the model
minimizes the intended loss to the desired
accuracy. To the best of our knowledge, this is the first fine-tuning algorithm with provable guarantees.

To prove convergence guarantees, we need to ask what the desired
convergence guarantee is. Usually, models are required to converge to
the generalization guarantee and we use the same criterion. To this
end, we first state a known generalization theorem.  Let
$h_{\h\lambda} = \argmin \lambda \sL_{\h{\sD}_k}(h) + (1-\lambda)
\sL_{\h{\sC}}(h)$ and $h_{\lambda} = \argmin \lambda
\sL_{{\sD_k}}(h) + (1-\lambda) \sL_{{\sC}}(h)$.
\begin{lemma}[~\citep{blitzer2008learning}]
If the pseudo-dimension of the $\sH$ is $d$ \footnote{\cite{blitzer2008learning} states the result for $0-1$ loss, but it can extended to other loss functions.}, then with
probability at least $1- \delta$,
\begin{align*}
\conf{&} \lambda \sL_{{\sD_k}}(h_{\h\lambda}) + (1-\lambda)
\sL_{{\sC}}(h_{\h\lambda}) - \lambda \sL_{{\sD_k}}(h_\lambda) +
(1-\lambda) \sL_{{\sC}}(h_\lambda) \conf{\\&} = \cO \left(
\sqrt{\frac{\lambda^2}{m_k} + \frac{(1-\lambda)^2}{m_C}} \cdot
\sqrt{d\log \frac{1}{\delta}} \right).
\end{align*}
\end{lemma}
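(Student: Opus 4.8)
The plan is to treat the $\lambda$-weighted objective as a single empirical process and bound the excess risk of its empirical minimizer by a uniform deviation, which I control with McDiarmid's inequality together with a Rademacher/chaining argument. Write $\Phi_\lambda(h) = \lambda \sL_{\sD_k}(h) + (1-\lambda)\sL_{\sC}(h)$ for the true weighted loss and $\h\Phi_\lambda(h) = \lambda \sL_{\h\sD_k}(h) + (1-\lambda)\sL_{\h\sC}(h)$ for its empirical counterpart, so that $h_\lambda = \argmin_h \Phi_\lambda(h)$ and $h_{\h\lambda} = \argmin_h \h\Phi_\lambda(h)$, and the quantity to bound is $\Phi_\lambda(h_{\h\lambda}) - \Phi_\lambda(h_\lambda)$. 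First I would insert $\h\Phi_\lambda(h_{\h\lambda})$ and $\h\Phi_\lambda(h_\lambda)$ and use the standard three-term decomposition: the middle term $\h\Phi_\lambda(h_{\h\lambda}) - \h\Phi_\lambda(h_\lambda) \le 0$ by optimality of $h_{\h\lambda}$ for the empirical objective, and each of the two remaining terms is at most $\sup_{h\in\sH}|\Phi_\lambda(h) - \h\Phi_\lambda(h)|$. Hence it suffices to bound $\Psi := \sup_{h\in\sH}|\Phi_\lambda(h) - \h\Phi_\lambda(h)|$ and multiply by two.

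The central structural observation is that $\h\Phi_\lambda$ is a weighted empirical average over the $m_k + m_C$ independent samples in which each target sample carries weight $\lambda/m_k$ and each source sample carries weight $(1-\lambda)/m_C$. Since $\ell$ is bounded by one, replacing a single target (resp.\ source) sample changes $\Psi$ by at most $\lambda/m_k$ (resp.\ $(1-\lambda)/m_C$), so the bounded-difference constants satisfy $\sum_j c_j^2 = m_k(\lambda/m_k)^2 + m_C((1-\lambda)/m_C)^2 = \lambda^2/m_k + (1-\lambda)^2/m_C$, which is precisely the variance factor appearing in the statement. McDiarmid's inequality then gives, with probability at least $1-\delta$, that $\Psi \le \E[\Psi] + \sqrt{\tfrac12(\lambda^2/m_k + (1-\lambda)^2/m_C)\log(1/\delta)}$.

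It remains to bound $\E[\Psi]$. By symmetrization (valid term by term regardless of the weights) it is at most twice the weighted Rademacher complexity $\E_\sigma \sup_{h}\big|\sum_j w_j \sigma_j \ell(h(x_j),y_j)\big|$, where $w$ is the weight vector above with $\|w\|_2^2 = \lambda^2/m_k + (1-\lambda)^2/m_C$. Here lies the main obstacle: a naive split via subadditivity of the supremum only yields the looser rate $\lambda\R_{\h\sD_k,m_k}(\sH) + (1-\lambda)\R_{\h\sC,m_C}(\sH) = \cO((\lambda/\sqrt{m_k} + (1-\lambda)/\sqrt{m_C})\sqrt{d})$, whereas the statement requires the tighter $\|w\|_2$ scaling, and $\|w\|_2 \le \lambda/\sqrt{m_k} + (1-\lambda)/\sqrt{m_C}$ strictly in general. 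To recover it I would run Dudley's entropy integral in the weighted empirical $L_2$ metric $d_w(h,h')^2 = \sum_j w_j^2(\ell(h(x_j),y_j)-\ell(h'(x_j),y_j))^2$, whose diameter is at most $\|w\|_2$. Normalizing the weights to the probability measure $\mu_j = w_j^2/\|w\|_2^2$ factors the integral as $\|w\|_2 \int_0^1 \sqrt{\log N(u,\sH,L_2(\mu))}\,du$, and Haussler's covering-number bound for a class of pseudo-dimension $d$ makes this integral $\cO(\sqrt d)$, yielding $\E[\Psi] = \cO(\|w\|_2\sqrt d)$.

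Combining the two pieces gives $\Phi_\lambda(h_{\h\lambda}) - \Phi_\lambda(h_\lambda) \le 2\Psi = \cO\big(\|w\|_2(\sqrt d + \sqrt{\log(1/\delta)})\big)$, and since $\sqrt d + \sqrt{\log(1/\delta)} = \cO(\sqrt{d\log(1/\delta)})$ (for $d,\log(1/\delta)\ge 1$) this matches the claimed bound $\cO(\sqrt{\lambda^2/m_k + (1-\lambda)^2/m_C}\cdot\sqrt{d\log(1/\delta)})$. The only genuinely delicate step is the chaining bound establishing the $\|w\|_2$ (rather than $\lambda/\sqrt{m_k}+(1-\lambda)/\sqrt{m_C}$) dependence; note that strong convexity of $\ell$ plays no role here, as this is a pure generalization statement and everything else is the standard excess-risk-to-uniform-deviation reduction.
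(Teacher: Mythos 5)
First, a point of reference: the paper never proves this lemma itself --- it is imported (with a sign typo: the last two terms should be grouped under the minus sign, as you correctly assumed) from \citet{blitzer2008learning}, so your attempt can only be compared against the original argument there, not against anything in the appendices. Your route is correct: the three-term excess-risk decomposition, McDiarmid with bounded differences $\lambda/m_k$ per target point and $(1-\lambda)/m_C$ per source point (giving exactly $\sum_j c_j^2 = \lambda^2/m_k + (1-\lambda)^2/m_C$), symmetrization of the weighted process, and the Dudley--Haussler chaining bound in the normalized weighted metric are all valid, and together they give $\cO\bigl(\|w\|_2(\sqrt{d} + \sqrt{\log(1/\delta)})\bigr)$, which implies the stated bound. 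This is in fact marginally sharper than the source, which after symmetrization uses the growth function and a finite-class union-bound argument and therefore carries an extra $\sqrt{\log m}$ factor that the paper's $\cO(\cdot)$ statement silently absorbs.

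Second, your diagnosis of the ``main obstacle'' is mistaken, though harmlessly so. By Cauchy--Schwarz, $\lambda/\sqrt{m_k} + (1-\lambda)/\sqrt{m_C} \le \sqrt{2}\,\bigl(\lambda^2/m_k + (1-\lambda)^2/m_C\bigr)^{1/2}$, so the ``naive'' subadditive split of the supremum into the two per-domain Rademacher complexities already yields the claimed rate up to a factor $\sqrt{2}$, which is invisible inside a big-O. Chaining in the weighted metric buys only constants; it is not needed to recover the $\|w\|_2$ scaling. Alternatively, Massart's finite-class lemma applied to the weighted Rademacher sum gives the $\|w\|_2$ factor directly, since for each fixed $h$ the sub-Gaussian parameter of $\sum_j w_j\sigma_j \ell_j(h)$ is at most $\|w\|_2$ --- this is essentially the route taken by \citet{blitzer2008learning}. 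Your proof stands as written; only the claim that the simpler arguments fail should be deleted.
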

Since the generalization bound scales as $ \sqrt{\frac{\lambda^2}{m_k}
  + \frac{(1-\lambda)^2}{m_C}}$, the same accuracy in convergence is
desired.  Let
$
\epsilon_\lambda = \sqrt{\frac{\lambda^2}{m_k} + \frac{(1-\lambda)^2}{m_C}},
$
denote the desired convergence guarantee.
For strongly convex functions, we show that one can achieve this
desired accuracy using \textsc{Dapper}, furthermore the amount of
additional data is a constant multiple of $m_k$, independent of
$\lambda$ and $m$.
\begin{theorem}[Appendix~\ref{app:data}]
\label{thm:data}
Assume that the loss function is $\mu$-strongly convex and assume that
the gradients are $G$-smooth. Let $\sH$ admit diameter at most $R$. Let
$
r \geq G^2 \left(\frac{4G}{\mu} + 2 R \right)^2,
$
a constant independent of $\lambda$.  Let the learning rate $\eta =
\frac{1}{G\sqrt{rm_k}} \min \left(\frac{2G
  \lambda}{\mu(1-\lambda)}, R\right)$. Then after $r \cdot m_k$ steps of SGD,
the output $h_A$ satisfies,
\begin{align*}
\E[ \lambda \sL_{\h{\sD}_k}(h_A) + (1-\lambda) \sL_{\h{\sC}}(h_A)]
\conf{\\} \leq \E[\lambda \sL_{\h{\sD}_k}(h_{\h{\lambda}}) +
  (1-\lambda) \sL_{\h{\sC}}(h_{\h{\lambda}})] + \epsilon_\lambda.
\end{align*}
\end{theorem}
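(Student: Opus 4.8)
The plan is to recognize \textsc{Dapper} as projected stochastic gradient descent, run from a warm start, on the convex objective
$F(h) = \lambda \sL_{\h{\sD}_k}(h) + (1-\lambda)\sL_{\h{\sC}}(h)$, and to output the averaged iterate $h_A = \frac{1}{T}\sum_{t=1}^{T} h_t$ with $T = r\cdot m_k$. Since $\ell$ is convex in $h$ (it is even $\mu$-strongly convex) and its per-example gradients are bounded, $\|\nabla \ell(h(x),y)\|\le G$, the objective $F$ is convex with $\|\nabla F\|\le G$, and its minimizer is exactly $h_{\h\lambda}$. At each step the algorithm forms the \emph{exact} gradient of the cheap local term $\sL_{\h{\sD}_k}$ and a \emph{single-sample} estimate of $\nabla\sL_{\h{\sC}}$ drawn from a fresh element of $\h{\sC}$; the at most $r\cdot m_k$ such elements are precisely those pre-fetched as $\h{\sC}'$, which is what keeps the communication cost at $r\cdot m_k$. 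Hence the stochastic gradient $g_t$ satisfies $\E[g_t\mid h_t] = \nabla F(h_t)$ and $\|g_t\|\le G$, so the standard convex-SGD guarantee for a constant step size yields $\E[F(h_A)] - F(h_{\h\lambda}) \le \frac{D_0^2}{2\eta T} + \frac{\eta G^2}{2}$, where $D_0 = \|h_0 - h_{\h\lambda}\|$ is the initial distance.

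First I would control $D_0$ using the warm start. Initializing at the central model $h_0 = \argmin_h \sL_{\h{\sC}}(h)$ and using that $\sL_{\h{\sC}}$ is $\mu$-strongly convex (an average of $\mu$-strongly convex losses) with $\nabla\sL_{\h{\sC}}(h_0)=0$, strong convexity gives $\mu\|h-h_0\| \le \|\nabla\sL_{\h{\sC}}(h)\|$. Applying this at $h_{\h\lambda}$ and invoking stationarity, $\lambda\nabla\sL_{\h{\sD}_k}(h_{\h\lambda}) + (1-\lambda)\nabla\sL_{\h{\sC}}(h_{\h\lambda})=0$, so that $\|\nabla\sL_{\h{\sC}}(h_{\h\lambda})\| = \tfrac{\lambda}{1-\lambda}\|\nabla\sL_{\h{\sD}_k}(h_{\h\lambda})\| \le \tfrac{\lambda G}{1-\lambda}$, gives $\|h_{\h\lambda}-h_0\| \le \tfrac{\lambda G}{\mu(1-\lambda)}$. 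Together with the trivial diameter bound $\|h_{\h\lambda}-h_0\|\le R$, this yields $D_0 \le \min\big(\tfrac{\lambda G}{\mu(1-\lambda)},\, R\big)$. This is exactly the role of strong convexity: not in the SGD rate itself (which is only the convex $1/\sqrt{T}$ rate, all that the target accuracy $\epsilon_\lambda$ demands) but in bounding how far the warm start sits from $h_{\h\lambda}$.

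Next I would substitute $\eta = \frac{1}{G\sqrt{r m_k}}\min\big(\tfrac{2G\lambda}{\mu(1-\lambda)},\,R\big)$ and $T=r m_k$. Writing $c = \min\big(\tfrac{2G\lambda}{\mu(1-\lambda)},\,R\big)$, one checks $D_0\le c$, so $\frac{D_0^2}{2\eta T}\le \frac{cG}{2\sqrt{r m_k}}$ and $\frac{\eta G^2}{2} = \frac{cG}{2\sqrt{r m_k}}$, whence $\E[F(h_A)]-F(h_{\h\lambda}) \le \frac{cG}{\sqrt{r m_k}}$. It then remains to show this is at most $\epsilon_\lambda$, i.e. that $r \ge \frac{c^2 G^2}{\epsilon_\lambda^2 m_k}$. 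Since $\epsilon_\lambda^2 m_k = \lambda^2 + \frac{(1-\lambda)^2 m_k}{m_C} \ge \lambda^2$, it suffices that $r \ge \frac{c^2 G^2}{\lambda^2} = G^2\min\big(\tfrac{4G^2}{\mu^2(1-\lambda)^2},\,\tfrac{R^2}{\lambda^2}\big)$.

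The main obstacle, and the only genuinely delicate step, is that this final inequality must hold uniformly in $\lambda$ with $r$ independent of $\lambda$. Neither term of the inner minimum is bounded on its own ($\tfrac{1}{(1-\lambda)^2}$ diverges as $\lambda\to 1$ and $\tfrac{1}{\lambda^2}$ as $\lambda\to 0$), so the argument hinges on the cancellation supplied by the minimum: the quantity $\min\big(\tfrac{2G}{\mu(1-\lambda)},\tfrac{R}{\lambda}\big)$ is maximized where its two arguments coincide, at $\lambda^\star = \tfrac{\mu R}{2G+\mu R}$, where the common value is $\tfrac{2G}{\mu}+R$. Hence $\min\big(\tfrac{4G^2}{\mu^2(1-\lambda)^2},\tfrac{R^2}{\lambda^2}\big)\le \big(\tfrac{2G}{\mu}+R\big)^2$ for every $\lambda$, so $\frac{c^2G^2}{\lambda^2} \le G^2\big(\tfrac{2G}{\mu}+R\big)^2 \le G^2\big(\tfrac{4G}{\mu}+2R\big)^2 = r$, closing the argument with a factor of four to spare. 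A minor point to verify is that supplying the global samples from the once-drawn $\h{\sC}'$ (rather than i.i.d.\ from $\h{\sC}$ at each step) preserves the unbiasedness and the $\|g_t\|\le G$ bound used above; sampling without replacement only decreases the gradient variance and does not affect the guarantee.
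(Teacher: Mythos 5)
Your proposal is correct and follows essentially the same route as the paper's proof: a constant-step-size SGD guarantee warm-started at the central model $h_c$, a bound on the warm-start distance $\|h_c - h_{\h{\lambda}}\|$ via strong convexity of $\sL_{\h{\sC}}$ combined with optimality of $h_{\h{\lambda}}$ for the mixture, substitution of the stated $\eta$ and $T = r m_k$, and a uniform-in-$\lambda$ bound on $\min\bigl(\tfrac{2G}{\mu(1-\lambda)},\tfrac{R}{\lambda}\bigr)$ so that $r$ can be chosen independent of $\lambda$. Your two local refinements --- bounding the distance through gradient monotonicity rather than function values (gaining a factor of $2$, though note the stationarity conditions should be stated as variational inequalities if $\sH$ is a constrained set of diameter $R$), and maximizing the min at its crossing point $\lambda^\star = \tfrac{\mu R}{2G + \mu R}$ instead of the paper's case split at $\lambda = 1/2$ --- tighten constants but do not change the structure of the argument.
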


The above bound shows the convergence result for a given $\lambda$. One can find the best $\lambda$ either by cross validation or by minimizing the overall generalization bound of \cite{blitzer2008learning}.  While the above algorithm  reduces the amount 
of data transfer and is computationally efficient,
it may be vulnerable to privacy issues in applications such as FL. We propose several alternatives to overcome these privacy issues in Appendix~\ref{sec:practical}.

\section{Model interpolation}

The above approaches assume that the final inference model belongs to class $\cH$. In
practice, this may not be the case. One can learn a central model $h_c$ from a class $\cH_c$, and learn a local model $h_l$ from $\cH_l$, and use their interpolated model
\[
\lambda \cdot h_{l} + (1-\lambda) \cdot h_{c}.
\] 
More formally, let $h_c$ be the central or cluster model and let $\bar{h}_l
= (h_{l,1}, h_{l,2},\ldots, h_{l,p})$, where $h_{l,k}$ is the local model for client $k$.
Let $\lambda_k$ be the interpolated weight for client $k$ and let $\bar{\lambda}
= \lambda_1, \lambda_2,\ldots, \lambda_p$. 
If one has access to the true distributions,
then learning the best interpolated models can be formulated as the following optimization, 
\[
\min_{h_c, \bar{h}_l, \bar{\lambda}} \sum^p_{k=1} \frac{m_k}{m} \sL_{{\sD_k}}( (1-\lambda_k)h_c + \lambda_k h_{l,k}).
\]
Since, the learner does not have access to the true distributions, we replace $\sL_{{\sD_k}}( (1-\lambda_k)h_c + \lambda_k h_{l,k})$ with $\sL_{\h \sD_k}( (1-\lambda_k)h_c + \lambda_k h_{l,k})$ in the above optimization.
We now show a generalization bound for the above optimization.
\begin{theorem}[Appendix~\ref{app:model}]
\label{thm:model}
Let the loss $\ell$ is $L$ Lipschitz, $\sH_c$ be the
hypotheses class for the central model, and $\sH_\ell$ be the
hypotheses class for the local models. Let $h^*_c, \bar{\lambda}^*,
\bar{h}^{*}_{l}$ be the optimal values and $\h{h}^*_c,
\h{\lambda}^*_k, \h{h}^*_l$ be the optimal values for the empirical
estimates. Then, with probability at least $1- \delta$,
\begin{align}
& \sum^p_{k=1} \frac{m_k}{m} \sL_{{\sD_k}}(
  (1-\h{\lambda}^*_k)\h{h}^*_c + \h{\lambda}^*_k \h{h}^*_{l,k})
  \conf{\\&} - \sum^p_{k=1} \frac{m_k}{m} \sL_{{\sD_k}}(
  (1-{\lambda^*_k}){h^*_c} + \lambda^*_k {h^{*}_{l,k}}) \label{eq:mi} \\ & \leq 2L
  \left( \R_{\sU, m}(\sH_c) + \sum^p_{k=1} \frac{m_k}{m} \R_{\sD_k,
    m_k} (\sH_l) \right) + 2 \sqrt{\frac{\log \frac{1}{\delta}}{m}}. \nonumber
\end{align}
\end{theorem}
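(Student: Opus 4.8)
The plan is to treat this as a one-sided uniform-convergence bound for the weighted empirical objective, being careful to isolate the \emph{shared} central hypothesis from the per-client local hypotheses and interpolation weights. Write $\Phi(h_c,\bar h_l,\bar\lambda)=\sum_k \frac{m_k}{m}\sL_{\sD_k}((1-\lambda_k)h_c+\lambda_k h_{l,k})$ for the true objective and $\h\Phi$ for its empirical counterpart (with $\sL_{\sD_k}$ replaced by $\sL_{\h{\sD}_k}$). Since $(\h h^*_c,\h{\bar h}^*_l,\h{\bar\lambda}^*)$ minimizes $\h\Phi$ and $(h^*_c,\bar h^*_l,\bar\lambda^*)$ minimizes $\Phi$, the left-hand side of \eqref{eq:mi} equals $\Phi(\h h^*)-\Phi(h^*)$, which I would decompose as $[\Phi(\h h^*)-\h\Phi(\h h^*)]+[\h\Phi(\h h^*)-\h\Phi(h^*)]+[\h\Phi(h^*)-\Phi(h^*)]$. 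The middle bracket is nonpositive by optimality of $\h h^*$; the last bracket involves only the \emph{fixed} optimum $h^*$ and is controlled by a single application of McDiarmid's inequality, while the first bracket is bounded by the uniform deviation $\sup_{h_c,\bar h_l,\bar\lambda}(\Phi-\h\Phi)$. Treating $h^*$ separately, rather than passing through a two-sided supremum, is what yields the factor $2$ (not $4$) in front of the Rademacher terms.

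Next I would bound the two stochastic pieces. Replacing a single example (say the $j$-th point of client $k$) changes $\h\Phi$, and hence both the fixed-$h^*$ term and the supremum, by at most $\frac{m_k}{m}\cdot\frac{1}{m_k}=\frac1m$ since $\ell\le 1$; McDiarmid with these $m$ bounded differences gives concentration of each term around its mean at scale $\sqrt{\log(1/\delta)/(2m)}$, and the two contributions combine into the additive $2\sqrt{\log(1/\delta)/m}$. For the expected supremum I would apply the standard symmetrization inequality, introducing Rademacher variables $\sigma_{k,j}$ and reducing $\E\sup(\Phi-\h\Phi)$ to (twice) the empirical Rademacher complexity of the interpolated loss class $\{(x,y)\mapsto \ell((1-\lambda_k)h_c(x)+\lambda_k h_{l,k}(x),y)\}$ on the pooled sample.

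The remaining work is to peel off $\ell$ and split the interpolated class. Since $\ell$ is $L$-Lipschitz in its first argument and the hypotheses are $\Delta^\sY$-valued, I would invoke a (vector-valued) contraction/Talagrand lemma to remove $\ell$ at the cost of the factor $L$, leaving $\frac1m\E_\sigma\sup_{h_c,\bar h_l,\bar\lambda}\sum_k\sum_j\sigma_{k,j}\big((1-\lambda_k)h_c(x_{k,j})+\lambda_k h_{l,k}(x_{k,j})\big)$. Using subadditivity of the supremum I would split this into a central block $\sum_k(1-\lambda_k)\sum_j\sigma_{k,j}h_c(x_{k,j})$ and per-client local blocks $\lambda_k\sum_j\sigma_{k,j}h_{l,k}(x_{k,j})$. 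Because the $h_{l,k}$ and the weights $\lambda_k\in[0,1]$ decouple across clients, the local contribution collapses exactly to $\sum_k\frac{m_k}{m}\R_{\sD_k,m_k}(\sH_l)$ (a weight in $[0,1]$ cannot increase a per-client Rademacher term). The central block, after removing the weights $1-\lambda_k\in[0,1]$, is meant to collapse to the single pooled complexity $\R_{\sU,m}(\sH_c)$, where $\sU=\sum_k\frac{m_k}{m}\sD_k$.

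I expect the central block to be the main obstacle. Unlike the local models, the single $h_c$ is \emph{shared} across all clients while each block carries its own optimizable weight $1-\lambda_k$, so one cannot simply optimize each $\lambda_k$ independently: doing so would replace $\sum_j\sigma_{k,j}h_c(x_{k,j})$ by its positive part client-by-client and inflate the bound beyond the pooled $\R_{\sU,m}(\sH_c)$. The clean way to control this is to treat each scaling $t\mapsto(1-\lambda_k)t$ as a $1$-Lipschitz map and apply the contraction/comparison inequality blockwise, so the weights are dropped \emph{uniformly} over $h_c$ rather than optimizing $\bar\lambda$ pointwise; combined with subadditivity this keeps the shared $h_c$ intact and recovers a single $\R_{\sU,m}(\sH_c)$ over all $m$ pooled samples. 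A secondary technical point is the vector-valued contraction needed to strip $\ell$ when outputs lie in the simplex; assuming the clean Lipschitz-contraction form, assembling the three pieces---the $L$-scaled central and local Rademacher terms and the two concentration terms---yields exactly the stated bound.
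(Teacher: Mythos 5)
Your skeleton is the same as the paper's: bound the excess risk through uniform deviations using optimality of the empirical minimizer, apply McDiarmid, symmetrize, strip $\ell$ by Lipschitz contraction, and split the interpolated class into a pooled central complexity plus per-client local complexities. Your decomposition (one-sided supremum plus a Hoeffding term at the fixed optimum $h^*$) is in fact tidier than the paper's bound by twice the supremum, and your remark about recovering the factor $2$ honestly is well taken; everything up to and including symmetrization is sound. The proof fails, however, exactly at the step you yourself call the main obstacle, and the fix you propose does not work.

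The contraction lemma compares $\E_\sigma \sup_h \sum_i \sigma_i \phi_i(h(x_i))$ with $\E_\sigma \sup_h \sum_i \sigma_i h(x_i)$ for \emph{fixed} Lipschitz maps $\phi_i$: the maps cannot themselves be optimization variables, and a Rademacher variable must stand outside each $\phi_i$. Your central block violates both requirements. The supremum over $\bar\lambda$ sits inside $\E_\sigma$, and carrying it out honestly gives $\sup_{\lambda_k \in [0,1]} (1-\lambda_k) A_k(h_c) = \bigl(A_k(h_c)\bigr)_+$ with $A_k(h_c) = \sum_j \sigma_{k,j} h_c(x_{k,j})$, a positive part with no Rademacher variable outside it; dropping the weights ``uniformly over $h_c$'' is precisely the forbidden exchange of $\sup_{\bar\lambda}$ with $\E_\sigma$ (for fixed $\bar\lambda$ the weights can indeed be removed, but not inside the supremum). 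The inequality you would then need,
\[
\E_\sigma \sup_{h_c} \sum_{k} \bigl(A_k(h_c)\bigr)_+ \;\le\; \E_\sigma \sup_{h_c} \sum_{k} A_k(h_c),
\]
is false: for $\sH_c$ a singleton constant class the right-hand side is $0$ while the left-hand side is $\Theta(\sqrt{mp})$. The same phenomenon falsifies your parenthetical claim for the local blocks---a free weight in $[0,1]$ \emph{can} increase a per-client Rademacher term, since with $B_k(h) = \sum_j \sigma_{k,j} h(x_{k,j})$ one has $\E[(\sup_h B_k(h))_+] > \E[\sup_h B_k(h)]$ whenever $\sup_h B_k(h) < 0$ with positive probability (again, singleton classes). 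Nor is this a technicality of your particular route: taking $\sH_c$ and $\sH_l$ to be singletons (so both Rademacher terms vanish) and client label biases of order $1/\sqrt{m_k}$, the joint optimization over $\bar\lambda \in [0,1]^p$ incurs excess risk $\Theta(\sqrt{p/m})$ with high probability, which exceeds the claimed bound $2\sqrt{\log(1/\delta)/m}$ once $p \gg \log(1/\delta)$; the $p$ free interpolation weights carry statistical capacity that $\R_{\sU,m}(\sH_c) + \sum_k \frac{m_k}{m} \R_{\sD_k, m_k}(\sH_l)$ simply does not measure. For what it is worth, the paper's own proof is no more rigorous at this point---it sets $\sH = \bar\lambda \sH_c + (1-\bar\lambda)\bar\sH_l$ and invokes ``sub-additivity of Rademacher complexity,'' silently treating $\bar\lambda$ as fixed---so you have correctly located the crux; but closing it requires something extra, e.g.\ fixing or discretizing $\bar\lambda$ (a cover of $[0,1]^p$ contributes an additional term of order $\sqrt{p/m}$ up to logarithms, which is what the corollary's $\sqrt{d_l p \log(em/d_l)/m}$ term can absorb when $d_l \ge 1$), or assumptions on the classes ensuring $\sup_h \sum_j \sigma_{k,j} h(x_{k,j}) \ge 0$ pointwise, neither of which yields the theorem exactly as stated.
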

Standard bounds on Rademacher complexity by the pseudo-dimension yields the following corollary.
\begin{corollary}
Assume that $\ell$ is L Lipschitz. Let $h^*_c, \bar{\lambda}^*,
\bar{h}^{*}_{l}$ be the optimal values and $\h{h}^*_c,
\h{\lambda}^*_k, \h{h}^*_l$ be the optimal values for the empirical
estimates.  Then with probability at least $1- \delta$, the LHS of~\eqref{eq:mi} is bounded by
\begin{align*}
2L
  \left( \sqrt{\frac{d_c}{m}\log \frac{em}{d_c}} + \sqrt{\frac{d_lp}{m}\log
    \frac{em}{d_l}}\right) + 2\sqrt{\frac{\log \frac{1}{\delta}}{m}},
\end{align*}
where $d_c$ is the pseudo-dimension of $\sH_c$ and $d_l$ is the
pseudo-dimension of $\sH_l$.
\end{corollary}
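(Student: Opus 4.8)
The plan is to prove this as a uniform-convergence bound over the joint parameter $\theta = (h_c, \bar h_l, \bar\lambda)$. Write the population and empirical objectives as
\[
\Phi(\theta) = \sum_{k=1}^p \frac{m_k}{m}\sL_{\sD_k}(g_k^\theta), \qquad \h\Phi(\theta) = \sum_{k=1}^p \frac{m_k}{m}\sL_{\h\sD_k}(g_k^\theta),
\]
where $g_k^\theta = (1-\lambda_k)h_c + \lambda_k h_{l,k}$. The left-hand side of \eqref{eq:mi} is exactly $\Phi(\h\theta) - \Phi(\theta^*)$, where $\h\theta = (\h{h}^*_c,\h{\bar h}_l,\h{\bar\lambda})$ minimizes $\h\Phi$ and $\theta^* = (h^*_c,\bar h^*_l,\bar\lambda^*)$ minimizes $\Phi$. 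First I would use the standard decomposition
\[
\Phi(\h\theta) - \Phi(\theta^*) \le \sup_\theta\bigl(\Phi(\theta) - \h\Phi(\theta)\bigr) + \bigl(\h\Phi(\theta^*) - \Phi(\theta^*)\bigr),
\]
which holds because $\h\Phi(\h\theta)\le\h\Phi(\theta^*)$. The second term involves only the fixed, non-random optimum $\theta^*$ and can be controlled directly by Hoeffding's inequality, while the first term is the genuine uniform-convergence quantity.

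For the concentration I would apply McDiarmid's inequality to $\sup_\theta(\Phi-\h\Phi)$ viewed as a function of all $m$ examples. Replacing a single example belonging to client $k$ changes $\sL_{\h\sD_k}$ by at most $1/m_k$ since the loss is bounded by one, and this term carries weight $m_k/m$, so the bounded-difference constant is $1/m$ uniformly over all $m$ coordinates. This produces fluctuations of order $\sqrt{\log(1/\delta)/m}$, and together with the Hoeffding bound on the fixed-optimum term yields the additive $2\sqrt{\log(1/\delta)/m}$ in the statement.

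It then remains to bound $\E[\sup_\theta(\Phi-\h\Phi)]$. I would symmetrize in the usual way, introducing Rademacher variables $\sigma_{k,j}$ and the factor $2$, and then apply the Ledoux--Talagrand contraction lemma to peel off the $L$-Lipschitz loss $\ell$, which accounts for the factor $L$ and reduces the problem to the Rademacher complexity of the interpolated class $\{(k,x)\mapsto (1-\lambda_k)h_c(x)+\lambda_k h_{l,k}(x)\}$ over the pooled sample. I would then split this into a central and a local contribution. The central model $h_c$ is shared across all $m$ pooled examples, which are drawn from $\sU = \sum_k (m_k/m)\sD_k$, so its contribution is the pooled complexity $\R_{\sU,m}(\sH_c)$; each local model $h_{l,k}$ touches only client $k$'s $m_k$ examples, contributing $\R_{\sD_k,m_k}(\sH_l)$ with weight $m_k/m$, which gives the term $\sum_k (m_k/m)\R_{\sD_k,m_k}(\sH_l)$.

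The hard part will be the clean handling of the interpolation weights $\bar\lambda$, which are optimized jointly with the hypotheses and multiply both the central and local models. A naive additive split followed by maximizing over each $\lambda_k\in[0,1]$ replaces each client's central contribution $\sum_j\sigma_{k,j}h_c(x_{k,j})$ by its positive part, which is no smaller than the quantity appearing in the pooled complexity and can also generate spurious cross terms coupling $h_c$ with $h_{l,k}$, thereby exceeding the claimed bound. The key is therefore to exploit that multiplication by $\lambda_k$ or $1-\lambda_k$, each lying in $[0,1]$ and vanishing at the origin, is a contraction, so that removing these coefficients does not inflate the complexity beyond $\R_{\sU,m}(\sH_c) + \sum_k (m_k/m)\R_{\sD_k,m_k}(\sH_l)$; a secondary technical point is that, since the hypotheses are $\Delta^\sY$-valued, the contraction steps must use the vector-valued form of the comparison inequality rather than the scalar one.
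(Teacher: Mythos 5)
There is a genuine gap: your proposal proves the wrong statement. What you outline --- the uniform-convergence decomposition, McDiarmid with bounded differences $1/m$, symmetrization, and contraction to peel off the $L$-Lipschitz loss --- is a re-derivation of Theorem~\ref{thm:model}, which the paper has already established and which this corollary is meant to build on. Your argument terminates at the bound
\[
2L\left(\R_{\sU,m}(\sH_c)+\sum_{k=1}^p\frac{m_k}{m}\R_{\sD_k,m_k}(\sH_l)\right)+2\sqrt{\frac{\log\frac{1}{\delta}}{m}},
\]
but the corollary asserts a bound in terms of the pseudo-dimensions $d_c$ and $d_l$, and in particular the factor $p$ in $\sqrt{d_l p/m}$ never appears anywhere in your proposal. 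The entire content of the corollary, given the theorem, is precisely this conversion, and it is absent.

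The missing steps are: (a) the standard pseudo-dimension bound $\R_{\sU,m}(\sH_c)\le\sqrt{\frac{d_c}{m}\log\frac{em}{d_c}}$ for the central class; and (b) the averaging argument for the local term,
\[
\sum_{k=1}^p\frac{m_k}{m}\R_{\sD_k,m_k}(\sH_l)\le\sum_{k=1}^p\frac{m_k}{m}\sqrt{\frac{d_l}{m_k}\log\frac{em_k}{d_l}}\le\sqrt{\log\frac{em}{d_l}}\cdot\frac{\sqrt{d_l}}{m}\sum_{k=1}^p\sqrt{m_k}\le\sqrt{\frac{d_l p}{m}\log\frac{em}{d_l}},
\]
where the first inequality applies the per-client pseudo-dimension bound, the second uses $m_k\le m$ inside the logarithm, and the last uses Cauchy--Schwarz, $\sum_k\sqrt{m_k}\le\sqrt{p\sum_k m_k}=\sqrt{pm}$. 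Step (b) is the non-trivial part --- it is the same device used in the paper's proof of Corollary~\ref{cor:cluster} (Appendix~\ref{app:cor_cluster}) --- and it is what makes the final bound depend only on the average number of samples per client $m/p$ rather than the minimum. Without it, your proof establishes only the theorem, not the corollary. (Your closing concern about handling the interpolation weights $\bar\lambda$ via vector-valued contraction is a legitimate technical point about the theorem's proof, but it is orthogonal to what this corollary requires; if you simply invoke Theorem~\ref{thm:model}, that difficulty is already absorbed.)
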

Hence for models to generalize well, it is desirable to have $m \gg
d_c$ and the average number of samples to be much greater than $d_l$,
i.e., $m/p \gg d_l$. 
Similar to
Corollary~\ref{cor:cluster}, this bound only depends on the average
number of samples and not the minimum number of samples.

A common approach for model interpolation in practice is to first
train the central model $h_c$ and then train the local model $h_l$
separately and find the best interpolation coefficients, i.e.,
\[
\h{h}_c = \argmin_{h_c} \sum^p_{k=1} \frac{m_k}{m} \sL_{\h{\sD}_k}(h_c
)\conf{,}\arxiv{\text{\quad and \quad}} \h{h}_{l,k} = \argmin_{h_{l,k}} \sL_{\h{\sD}_k} (h_{l,k} ).
\]
We show that this approach might not be optimal in some instances and also propose a joint optimization for minimizing local and global models. Due to space constraints, we refer reader to Appendix~\ref{sec:mapper} for details. We refer to model interpolation algorithms by~\textsc{Mapper}.

\begin{table}[t]
  \centering
  \caption{Test loss of \textsc{HypCluster} as a function of number of clusters $q$ for the synthetic dataset.}
  \begin{tabular}{c c c c c c} 
    $q$ &  $1$ & $2$ & $3$ & $4$ & $5$  \\ \hline
    test loss &  $3.4$ & $3.1$ &  $2.9$ &  $2.7$  &  $2.7$  
  \end{tabular}
\label{tab:toy_cluster}
\end{table}
\begin{table}[H]
  \centering
  \caption{Test accuracy of seen clients for the EMNIST dataset.}
\label{tab:seen}
  \begin{tabular}{l c c c c} 
  	& initial model & +\textsc{Finetune} & +\textsc{Dapper} & +\textsc{Mapper}  \\ \hline
  	\textsc{FedAvg} & 84.3\% & 90.0\% & 90.1\% & 90.0\% \\
  		\textsc{Agnostic} & 84.6\% & 89.9\% & 90.0\% & 89.9\% \\
  	\textsc{HypCluster} $(q=2)$ & 89.1\% & 90.2\% & \textbf{90.3\%} & 90.1\% \\
  \end{tabular}
\end{table}
\begin{table}[H]
  \centering
  \caption{Test accuracy of unseen clients for the EMNIST dataset.}
\label{tab:unseen}
  \begin{tabular}{l c c c c} 
  	& initial model & +\textsc{Finetune} & +\textsc{Dapper} & +\textsc{Mapper}  \\ \hline
  	\textsc{FedAvg} & 84.1\% & \textbf{90.3\%} & \textbf{90.3\%} & 90.2\% \\
  		\textsc{Agnostic} & 84.5\% & 90.1\% & 90.2\% & 90.1\% \\
  	\textsc{HypCluster}  $(q=2)$ & 88.8\% & 90.1\% & 90.1\% & 89.9\% \\
  \end{tabular}
\end{table}

\section{Experiments}

\subsection{Synthetic dataset}

We first demonstrate the proposed algorithms on a synthetic dataset
for density estimation.
    Let $\sX = \emptyset$, $\sY = [d]$, and
$d=50$. Let $\ell$ be cross entropy loss and the number of users
$p=100$.  We create client distributions as a mixture of a uniform component, a
cluster component, and an individual component. 
 \begin{wrapfigure}{r}{0.34\textwidth}
\centering
   \caption{Test loss of  algorithms as a function of number of
    samples per user for the synthetic dataset.}
  \label{fig:toy}
 \includegraphics[width=0.34\columnwidth]{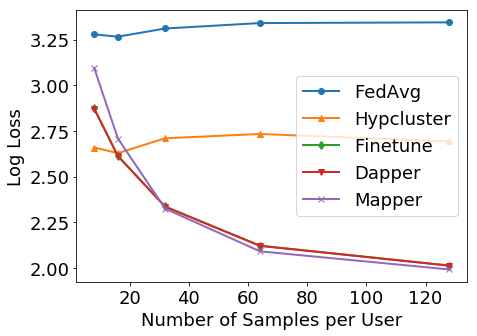}
    \end{wrapfigure}
    The details of the distributions
are in Appendix~\ref{app:toy}. We evaluate the algorithms as we vary the number of
samples per user. The results are in Figure~\ref{fig:toy}. \textsc{HypCluster} performs the best
when the number of samples per user $m_k$ is very small. If $m_k$ is large, \textsc{Mapper} performs the best followed closely by \textsc{Finetune} and \textsc{Dapper}. However, the difference between \textsc{Finetune} and \textsc{Dapper} is statistically insignificant.
In order to understand the effect of clustering, we evaluate various
clustering algorithms as a function of $q$ when $m_k
=100$, and the results are in Table~\ref{tab:toy_cluster}. Since the
clients are naturally divided into four clusters, as we increase $q$,
the test loss steadily decreases till the number of clusters reaches $4$
and then remains constant.

\subsection{EMNIST dataset}
 We  evaluate the proposed algorithms on the federated
\textsc{EMNIST-62} dataset~\citep{caldas2018leaf} provided by TensorFlow Federated
(TFF). The dataset consists of 3400 users' examples that are each one of 62 classes. We select 2500 users to train the global models (referred to as seen) and leave the remaining 900 as unseen clients reserved for evaluation only. We shuffle the clients first before splitting as the original client ordering results in disjoint model performance. 
The reported metrics are uniformly
averaged across clients similar to previous
works~\citep{jiang2019improving}.  For model architecture, we use a two-layer convolutional neural net. We refer to
Appendix~\ref{app:emnist} for more details on the architecture and training procedure.

The test results for seen and unseen clients are in Table~\ref{tab:seen} and Table~\ref{tab:unseen}, respectively. We trained models with \textsc{FedAvg}, \textsc{Agnostic}~\citep{mohri2019agnostic}, and \textsc{HypCluster} and combined them with \textsc{Finetune}, \textsc{Dapper}, and \textsc{Mapper}.
We observe that
\textsc{HypCluster} with two clusters performs significantly better compared to \textsc{FedAvg} and \textsc{Agnostic} models and improves accuracy by at least $4.3\%$.  Thus clustering is significantly better than training a single global model. 

The remaining algorithms \textsc{Dapper} and \textsc{Mapper} improve the accuracy by another $~1\%$ compared to \textsc{HypCluster}, but the EMNIST dataset is small and standard deviation in our experiments was in the order of $0.1\%$ and hence their improvement over \textsc{Finetune} is not statistically significant. However, these algorithms have provable generalization guarantees and thus would be more risk averse.

\section{Conclusion}

We presented a systematic learning-theoretic study of personalization
in learning and proposed and analyzed three algorithms: user
clustering, data interpolation, and model interpolation. For all three
approaches, we provided learning theoretic guarantees and efficient
algorithms. Finally, we empirically demonstrated the usefulness of the
proposed approaches on synthetic and EMNIST datasets.

\section{Acknowledgements}

Authors thank Rajiv Mathews, Brendan Mcmahan, Ke Wu, and Shanshan Wu for helpful comments and discussions.

\conf{
\section{Broader impact}
In this work, we provide algorithms and theoretical guarantees for personalization in machine learning and federated learning. While our work is theoretical in nature and does not present any immediate foreseeable societal consequences, we hope that having better personalization algorithms encourage a broader adaptation of personalization and federated learning. 
}

\bibliography{personal}
\conf{
\bibliographystyle{abbrvnat}
}
\newpage
\appendix

\section{Related works}

\subsection{Federated learning}
\label{app:federated}
FL was introduced by~\citet{mcmahan2017communication} as an efficient
method for training models in a distributed way. They proposed a new communication-efficient
optimization algorithm called \emph{FedAvg}.  They also showed that the training procedure
provides additional privacy benefits. The introduction of FL has given
rise to several interesting research problems, including the design of
more efficient communication strategies
\citep{konevcny2016federated,konecny2016federated2,
  suresh2017distributed, stich2018local, karimireddy2019scaffold}, the
study of lower bounds for parallel stochastic optimization with a
dependency graph \citep{woodworth2018graph}, devising efficient
distributed optimization methods benefiting from differential privacy
guarantees \citep{agarwal2018cpsgd}, stochastic optimization solutions
for the agnostic formulation \citep{mohri2019agnostic}, and
incorporating cryptographic techniques \citep{bonawitz2017practical}, meta-learning~\citep{chen2018federated},
see \citep{li2019federatedsurvey, kairouz2019advances} for an in-depth
survey of recent work in FL.

Federated learning often results in improved performance, as reported in several learning problems, including next word prediction \citep{hard2018federated,  yang2018applied}, vocabulary estimation
\citep{chen2019federated}, emoji prediction \citep{ramaswamy2019federated}, decoder models \citep{chen2019federatedb}, low latency vehicle-to-vehicle
communication \citep{samarakoon2018federated}, and predictive models
in health \citep{brisimi2018federated}.

\subsection{Personalization in federated learning}
\label{sec:related}

There are several recent works that focus on multi-task and meta-learning in the context of federated learning. \citet{smith2017federated} studied the problem of federated multi-task learning and proposed MOCHA, an algorithm that jointly learns parameters and a similarity matrix between user tasks. MOCHA tackles various aspects of distributed multitask learning including communication constraints, stragglers, and fault tolerance.  They focus on the convex setting and 
their application to non-convex deep learning models where strong duality is no longer guaranteed is unclear.

\citet{jiang2019improving} drew interesting connections between FedAvg and first-order model agnostic meta-learning (MAML) \citep{finn2017model}
and showed that FedAvg is in fact already a
meta-learning algorithm. \citet{fallah2020personalized} also proposed to use the MAML objective in a global model training to obtain a better personalizable global model. \citet{khodak2019adaptive} proposed ARUBA that improves upon gradient-based meta-learning approaches.
A variational approach for multi-task learning was proposed by \cite{corinzia2019variational}. 
Recently, \cite{hanzely2020federated} proposed to learn a model per user by adding an $\ell_2$ penalty on model parameters to ensure they are similar. 

Another line of work uses a set of local parameters, which are trained per-client,
and a set of global parameters, which are trained using FL. For example, 
\citet{bui2019federated} proposed to use user-representations by having a set of client-specific local parameters, which are trained per-client and a set of global parameters, which are trained using FL. \cite{arivazhagan2019federated, liang2020think} proposed to store some layers of the model locally, while training the rest of the model with federated learning.

\cite{peterson2019private} proposed to use techniques from a mixture of experts literature and their approach is similar to our approach of model interpolation, but they learn an interpolation weight based on features. 
Furthermore, there are no theoretical guarantees for their approach. A theoretical analysis of interpolation models without variable mixing weights was recently presented in \cite{agarwal2020federated}. Concurrent to this work, \cite{deng2020adaptive} proposed to use an interpolation of a local and global model. Their approach is similar to model interpolation in our paper.

\cite{wang2019federated} showed that federated models can be
fine-tuned based on local data. They proposed methods to find the best
hyper-parameters for fine-tuning and showed that it improves the next
word prediction of language models in virtual keyboard applications. \citet{yu2020salvaging} proposed several variants of the fine-tuning approach, including training only 
a few layers of the networks, adding a local penalty term in the form of model distillation,
or elastic weight averaging to the fine-tuning objective to improve local adaptation.

\cite{zhao2018federated} showed that one can improve the accuracy of FedAvg, by sharing a small amount of  public data to reduce the non-i.i.d.\ nature of the client data. \cite{sattler2019clustered} proposed to use cosine-similarity between gradient updates for clustering in federated learning. 
However, their approach requires all clients to participate in each round and 
hence is computationally infeasible. Personalization in other settings such as 
peer-to-peer networks have been studied by \cite{zantedeschi2019fully}. We refer the readers to \citep{kulkarni2020survey} for a survey of algorithms for personalization in FL.

\section{Global models}

\subsection{Example for the suboptimality of global models}
\label{app:example}

We
provide the following simple example, which shows that global models
can be a constant worse compared to the local model.

\begin{example}
Let $\cX = \Rset$ and $\cY = \{0, 1\}$. Suppose there are two clients
with distributions $\sD_1$ and $\sD_2$ defined as follows. $ \forall
x, \sD_1(x) = \sD_2(x)$ and $\sD_1(1 | x) = 1 $ if $ x> 0$ and zero
otherwise. Similarly, $\sD_2(1 | x) = 1 $ only if $ x < 0$ and zero
otherwise. Let $\cH$ be the class of threshold classifiers indexed by
a threshold $t \in \Rset$ and sign $s \in \{-1, 1\}$ such that
$h_{t,s} \in \cH$ is given by $h_{t,s}(x) = \indic_{(x - t)s > 0}$.
Further, suppose we are interested in zero-one loss and the number of
samples from both  domains is very large and equal.

The optimal classifier for $\sD_1$ is $h_{0,1}$ and the optimal
classifier for $\sD_2$ is $h_{0,-1}$, and they achieve zero error in
their respective clients. Since the number of samples is the same from both clients, $\sU$ is
the uniform mixture of the two domains, $\sU = 0.5 \sD_1 + 0.5 \sD_2$
. Note that for all $h \in \sH$, $\sL_{\sU}(h) = 0.5$ and hence the
global objective cannot differentiate between any of the hypotheses in
$\cH$. Thus, with high probability, any globally trained model incurs a
constant loss on both clients.
\end{example}

\subsection{Agnostic global model}
\label{app:agnostic}

Instead of assigning weights proportional to the number of
samples as in the uniform global model, we can weight them according to any $\lambda \in
\Delta^p$. For example, instead of uniform sample weights, we
can weight clients uniformly corresponding to $\lambda_k =
\frac{1}{p}$, for all $k$. Let $\bar \sD_\lambda$ denote the $\lambda$-weighted
empirical distribution and let $h_{\bar \sD_\lambda}$ be the minimizer
of loss over $\bar \sD_\lambda$.
Instead of the uniform global model described in the previous section, 
we can use the agnostic
loss, where we minimize the
maximum loss over a set of distributions. Let $\Lambda \subseteq
\Delta^p$. Agnostic loss is given by
\[
 \max_{\lambda \in \Lambda } \sL_{\bar{ \sD}_\lambda}(h).
\]
Let $h_{\bar{\sD}_{\Lambda}}$ be the minimizer.  Let $\s (\Lambda,
\bm) = \max_{\lambda\in \Lambda} \s (\lambda, \bm) $.  Let
$\Lambda_\epsilon$ be an $\epsilon$-cover of $\Delta^p$. Let $\bm$ denote the empirical distribution of samples $(m_1/m,
m_2/m,\ldots, m_p/m)$. The skewness
between the distributions $\lambda$ and $\bm$ is defined as $
\s (\lambda, \bm) =  \sum^p_{k=1} \frac{\lambda^2_k}{\bm_k},
$
where $\bm_k = m_k / m$.
With these
definitions, the generalization guarantee of \citep[Theorem 2]{mohri2019agnostic} 
for client one can be expressed as follows:
\begin{align*}
\conf{&} \sL_{\sD_1}(h_{\h{\sD}_{\Lambda_\epsilon}}) \leq
\sL_{\sD_1}(h_{\sD_1}) + \tcO \left( \sqrt{\s(\Lambda_\epsilon || \bm)} \cdot
\frac{\sqrt{d + \log
    \frac{|\Lambda_\epsilon|}{\delta}}}{\sqrt{m}} + \epsilon \right)
+ \disc_{\cH}(\sD_1, \sD_{\lambda*}) ,
\end{align*}
where $\lambda^* = \argmax_{\lambda} \sL_{\bar{ \sD}_\lambda}(h_{\h{\sD}_{\Lambda_\epsilon}})$ is the mixture weight
where the trained model $h_{\h{\sD}_{\Lambda_\epsilon}}$ has the highest loss.
Hence, this approach would personalize well for hard
distributions and can be considered as a step towards ensuring that models
work for all distributions. In this work, we show that training a
different model for each client would significantly improve the model
performance.

\section{Supplementary material for clustering}

\subsection{Baselines}
\label{app:cluster_baseline}

 If we have meta-features about the
data samples and clients, such as location or type of
device, we can use them to find clusters. This can be achieved
by algorithms such as $k$-means or variants.
This approach depends on the knowledge of the meta-features and their
relationship to the set of hypotheses under consideration. While it
may be reasonable in many circumstances, it may not be always
feasible.  If there are no meta-features, a natural approach is to cluster using
a Bregman divergence defined over the distributions
$\sD_k$~\citep{banerjee2005clustering}. 
However, it is likely that we would overfit as the generalization
of the density estimation depends on the covering number of the class of distributions
$\sD_1, \sD_2,\ldots, \sD_p$, which in general can be much larger than
that of the class of hypotheses $\cH$. To overcome this, we propose an
approach based on hypotheses under consideration which we discuss
next.

\subsection{Generalization of clustering algorithms}
\label{app:lem_cluster}
Recall that we solve for 
\begin{equation}
    \label{eq:emp_cluster}
    \min_{h_1, \ldots, h_q} \sum^p_{k=1} \frac{m_k}{m} \cdot
    \min_{i \in [q]} \sL_{\h{\sD}_k}(h_i).
\end{equation}
\begin{lemma}
\label{lem:cluster}
Let $h^*_1, h^*_2, \ldots, h^*_q$ be the $q$ models obtained by
solving~\eqref{eq:cluster} and $\h{h}^*_1, \h{h}^*_2, \ldots,
\h{h}^*_q$ be the $q$ models obtained by
solving~\eqref{eq:emp_cluster}. Then,
\begin{align}
  &  \sum^p_{k=1} \frac{m_k}{m} \cdot \left( \min_{i \in [q]} \sL_{\sD_k}(\h{h}^*_i)  
  -\min_{i \in [q]}
 \sL_{\sD_k}(h^*_i) \right) 
   \leq 2 \max_{h_1, \ldots, h_q} \left \lvert  \sum^p_{k=1} \frac{m_k}{m} \cdot \left( \min_{i \in [q]} \sL_{\sD_k}(h_i)  -  
  \min_{i \in [q]} \sL_{\h{\sD}_k}(h_i)  \right) \right \rvert. \label{eq:2}
\end{align}
\end{lemma}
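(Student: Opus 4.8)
The plan is to treat \eqref{eq:2} as the standard excess-risk decomposition for empirical risk minimization, where the ``risk'' is the clustered objective in \eqref{eq:cluster}. First I would introduce shorthand for the two objectives: write $F(h_1,\ldots,h_q) = \sum_{k=1}^p \frac{m_k}{m} \min_{i\in[q]} \sL_{\sD_k}(h_i)$ for the true clustered loss and $\h F(h_1,\ldots,h_q) = \sum_{k=1}^p \frac{m_k}{m}\min_{i\in[q]}\sL_{\h\sD_k}(h_i)$ for its empirical counterpart. With this notation the tuple $h^* = (h^*_1,\ldots,h^*_q)$ minimizes $F$, the tuple $\h h^* = (\h h^*_1,\ldots,\h h^*_q)$ minimizes $\h F$, and the left-hand side of \eqref{eq:2} is exactly $F(\h h^*) - F(h^*)$. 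Let $\Delta = \max_{h_1,\ldots,h_q}\lvert F(h_1,\ldots,h_q) - \h F(h_1,\ldots,h_q)\rvert$ denote the uniform deviation between the two objectives, so that the right-hand side of \eqref{eq:2} is precisely $2\Delta$.

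The core step is the add-and-subtract decomposition
\begin{align*}
F(\h h^*) - F(h^*)
&= \bigl(F(\h h^*) - \h F(\h h^*)\bigr) + \bigl(\h F(\h h^*) - \h F(h^*)\bigr) \\
&\quad + \bigl(\h F(h^*) - F(h^*)\bigr),
\end{align*}
after which I would bound each of the three brackets separately. The middle bracket is nonpositive, since $\h h^*$ minimizes $\h F$ over all tuples and therefore $\h F(\h h^*) \le \h F(h^*)$. For the first and third brackets, I would observe that each is the signed gap $F(\cdot) - \h F(\cdot)$ evaluated at a single fixed tuple, and so each is at most $\Delta$ in absolute value by the definition of the maximum. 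Summing the three bounds yields $F(\h h^*) - F(h^*) \le \Delta + 0 + \Delta = 2\Delta$, which is exactly \eqref{eq:2}.

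I do not expect a genuine obstacle here; this is the textbook two-times-uniform-deviation bound for empirical risk minimization. The only point requiring a little care is ensuring both generalization-gap brackets are controlled by the \emph{same} uniform quantity: because $\Delta$ takes the maximum over \emph{all} hypothesis tuples $(h_1,\ldots,h_q)$, it simultaneously upper-bounds the gap at the data-dependent tuple $\h h^*$ and at the population-optimal tuple $h^*$, and the factor of $2$ in \eqref{eq:2} is just the sum of these two contributions. Notably, no structural property of the inner $\min_{i\in[q]}$ is needed beyond the fact that the identical tuple is substituted into $F$ and $\h F$ within each bracket, which is why, as remarked before the statement, this bound holds for \emph{any} clustering algorithm.
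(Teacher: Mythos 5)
Your proof is correct and is essentially identical to the paper's: the paper performs the same add-and-subtract decomposition into the two uniform-deviation terms plus the empirical-optimality term $\h F(\h h^*) - \h F(h^*) \le 0$, and bounds each deviation term by the maximum over all tuples. The only difference is notational—the paper writes the three brackets out explicitly rather than introducing the shorthand $F$, $\h F$, and $\Delta$.
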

\begin{proof}
\begin{align*}
&  \sum^p_{k=1} \frac{m_k}{m} \cdot \min_{i \in [q]} \sL_{\sD_k}(\h{h}^*_i)  
 - \sum^p_{k=1} \frac{m_k}{m} \cdot \min_{i \in [q]} \sL_{\sD_k}(h^*_i) \nonumber \\
& = \sum^p_{k=1} \frac{m_k}{m} \cdot \min_{i \in [q]} \sL_{\sD_k}(\h{h}^*_i)  
 +  \sum^p_{k=1} \frac{m_k}{m} \cdot \min_{i \in [q]} \sL_{\h{\sD}_k}(\h{h}^*_i) 
 -  \sum^p_{k=1} \frac{m_k}{m} \cdot \min_{i \in [q]} \sL_{\h{\sD}_k}(\h{h}^*_i) 
   \nonumber \\
&  + \sum^p_{k=1} \frac{m_k}{m} \cdot \min_{i \in [q]} \sL_{\h{\sD}_k}(h^*_i) 
- \sum^p_{k=1} \frac{m_k}{m} \cdot \min_{i \in [q]} \sL_{\sD_k}(h^*_i)
 - \sum^p_{k=1} \frac{m_k}{m} \cdot \min_{i \in [q]} \sL_{\h{\sD}_k}(h^*_i) \nonumber \\
&  \leq 2 \max_{h_1, \ldots, h_q} \left \lvert  \sum^p_{k=1} \frac{m_k}{m} \cdot \min_{i \in [q]} \sL_{\sD_k}(h_i)  -  \sum^p_{k=1} \frac{m_k}{m} \cdot \min_{i \in [q]} \sL_{\h{\sD}_k}(h_i)  \right \rvert, 
 \end{align*}
 where the inequality follows by observing that $\sum^p_{k=1} \frac{m_k}{m} \cdot \min_{i \in [q]} \sL_{\h{\sD}_k}(\h{h}^*_i) \leq  \sum^p_{k=1} \frac{m_k}{m} \cdot \min_{i \in [q]} \sL_{\h{\sD}_k}(h^*_i)$, by the definition of $\h{h}^*_i$.
 \end{proof}
\subsection{Proof of Theorem~\ref{thm:cluster}}
\label{app:cluster}

For any set of real numbers $a_1, a_2, \ldots, a_q$ and $b_1, b_2, \ldots, b_q$, observe that
\begin{align*}
\min_i a_i -   \min_i b_i 
& = \min_i b_i + (a_i-b_i)  -  \min_i b_i  \leq  \min_i b_i + \max_i (a_i-b_i)  -  \min b_i  = \max_i (a_i-b_i).
\end{align*}
We first prove the theorem for one side. 
Let $f \colon[p] \to [q]$ be a  mapping from clients to clusters. Applying the above 
result yields,
\begin{align*}
& \max_{h_1, \ldots, h_q} \left( \sum^p_{k=1} m_k \cdot \min_{i \in [q]} \sL_{\sD_k}(h_k)  -  \sum^p_{k=1} m_k \cdot \min_{i \in [q]} \sL_{\h{\sD}_k}(h_k) \right) \\
& \leq   \max_{h_1, \ldots, h_q} \left(  \sum^p_{k=1} m_k \cdot \max_{k \in [p]} (\sL_{\sD_k}(h_k) - \sL_{\h{\sD}_k}(h_k)) \right) \\
 & =   \max_{h_1, \ldots, h_q} \left(  \sum^p_{k=1} m_k \cdot \max_{f(k)} (\sL_{\sD_k}(h_{f(k)}) - \sL_{\h{\sD}_k}(h_{f(k)})) \right) \\
 & =  \max_{h_1, \ldots, h_q} \max_{f} \left(  \sum^p_{k=1} m_k \cdot  (\sL_{\sD_k}(h_{f(k)}) - \sL_{\h{\sD}_k}(h_{f(k)})) \right) \\
 & = \max_{f}  \max_{h_1, \ldots, h_q}  \left(  \sum^p_{k=1} m_k \cdot  (\sL_{\sD_k}(h_{f(k)}) - \sL_{\h{\sD}_k}(h_{f(k)})) \right).
  \end{align*}
Since changing one sample changes the above function by at most $1$, for a given $f$, by the McDiarmid's inequality, with probability at least  $1-\delta$, the following holds:
\begin{align*}
 &  \max_{h_1, \ldots, h_q}  \left(  \sum^p_{k=1} m_k \cdot  (\sL_{\sD_k}(h_{f(k)}) - \sL_{\h{\sD}_k}(h_{f(k)})) \right)  \\
  & 
  \leq \E \left[   \max_{h_1, \ldots, h_q}  \left(  \sum^p_{k=1} m_k \cdot  (\sL_{\sD_k}(h_{f(k)}) - \sL_{\h{\sD}_k}(h_{f(k)})) \right) \right] + 2\sqrt{m \log \frac{1}{\delta}}.
\end{align*}
The number of  possible functions $f$ is $q^p$. Hence, by the union bound, for all $f$, with probability at least $1-\delta$, the following holds:
\begin{align*}
&  \max_{h_1, \ldots, h_q}  \left(  \sum^p_{k=1} m_k \cdot  (\sL_{\sD_k}(h_{f(k)}) - \sL_{\h{\sD}_k}(h_{f(k)})) \right)  \\
&  \leq \E \left[   \max_{h_1, \ldots, h_q}  \left(  \sum^p_{k=1} m_k \cdot  (\sL_{\sD_k}(h_{f(k)}) - \sL_{\h{\sD}_k}(h_{f(k)})) \right) \right] + 2 \sqrt{m p\log \frac{q}{\delta}}.
\end{align*}
For a given clustering $f$, by the sub-additivity of $\max$, 
\begin{align*}
   &    \max_{h_1, \ldots, h_q}  \left(  \sum^p_{k=1} m_k \cdot  (\sL_{\sD_k}(h_{f(k)}) - \sL_{\h{\sD}_k}(h_{f(k)})) \right) \\
      & =  \max_{h_1, \ldots, h_q}  \left(\sum^p_{k=1}  \sum_{k: f(k) = i} m_k \cdot  (\sL_{\sD_k}(h_{f(k)}) - \sL_{\h{\sD}_k}(h_{f(k)})) \right) \\
      & \leq \sum^{q}_{i=1}  \max_{h_1, \ldots, h_q}  \left( \sum_{k: f(k) = i} m_k \cdot  (\sL_{\sD_k}(h_{f(k)}) - \sL_{\h{\sD}_k}(h_{f(k)})) \right) \\
          & = \sum^{q}_{i=1}  \max_{h_1, \ldots, h_q}  \left( \sum_{k: f(k) = i} m_k \cdot  (\sL_{\sD_k}(h_i) - \sL_{\h{\sD}_k}(h_i)) \right) \\
                   & = \sum^{q}_{i=1}  \max_{h_i}  \left( \sum_{k: f(k) = i} m_k \cdot  (\sL_{\sD_k}(h_{i}) - \sL_{\h{\sD}_k}(h_{i})) \right) \\
                   & = \sum^{q}_{i=1}  \max_{h_i}  \left( m_{C_i} \cdot  (\sL_{\sC_i}(h_{i}) - \sL_{\h{\sC}_i}(h_{i})) \right),
\end{align*}
where $C_i$ is the cluster of clients such that $f(k) = i$ and $m_{C_i}$ is the
number of samples in that cluster, and $\sC_i$ is its distribution.
Thus,
\begin{align*}
 \E \left[  \max_{h_1, \ldots, h_q}  \left(  \sum^{p}_{k=1} m_k \cdot  (\sL_{\sD_k}(h_{f(k)}) - \sL_{\h{\sD}_k}(h_{f(k)})) \right)\right] 
& \leq \E \left[ \sum^{q}_{i=1}  \max_{h_i}  \left( m_{C_i} \cdot  (\sL_{\sC_i}(h_i) - \sL_{\h{\sC}_i}(h_i)) \right) \right] \\
& \leq \sum^{q}_{i=1}  \R_{\sC_i, m_{C_i}}(\sH) m_{C_i},
\end{align*}
where the last inequality follows from standard learning-theoretic guarantees
and the definition of Rademacher complexity~\citep{mohri2018foundations}. The proof 
follows by combining the above equations, normalizing by $m$, and the union bound.

\subsection{Proof of Corollary~\ref{cor:cluster}}
\label{app:cor_cluster}
We show that for any clustering,
\[
 \sum^q_{i=1} \frac{m_{C_i}}{m} \R_{\sC_i, m_{C_i}}(\sH)
 \leq  \sqrt{\frac{dp}{m}\log \frac{em}{d}}.
\]
The proof then follows from Theorem~\ref{thm:cluster}.
To prove the above observation, observe that
\begin{align*}
 \sum^q_{i=1} \frac{m_{C_i}}{m} \R_{\sC_i, m_{C_i}}(\sH) 
 &\leq \sum^q_{i=1} \frac{m_{C_i}}{m} \sqrt{\frac{dp}{m_{C_i} }\log \frac{em_{C_i}}{d}} 
  \leq \sum^q_{i=1} \frac{m_{C_i}}{m} \sqrt{\frac{dp}{m_{C_i}}\log \frac{em}{d}} \\
&   \leq \sum^q_{i=1} \frac{1}{m} \sum^q_{i=1}  \sqrt{{dp}{m_{C_i} }\log \frac{em}{d}}
  \leq \sqrt{\frac{dq}{m}\log \frac{em}{d}},
\end{align*}
where the last inequality follows from Jensen's inequality.

\section{Supplementary material for data interpolation}

\subsection{Proof of Theorem~\ref{thm:data}}
\label{app:data}

Let  $g(h) = \lambda \sL_{\h{\sD}_k}(h) + (1-\lambda) 
    \sL_{\h{\sC}}(h)$. Suppose we are interested in running $T$ steps of SGD on $g$, where at each step we independently
    sample $\h{\sD}_k$ with probability $\lambda$
    and $\h{\sC}$ with probability $1-\lambda$ and choose a random sample from the selected empirical distribution to compute the gradient.
    This can be simulated by first sampling $T$ elements from $\h{\sC}$, denoted by $\h{\sC}'$ and using $\h{\sC}'$ instead of $\h{\sC}$ during optimization. Hence to prove the theorem, suffices to show that $T=rm_k$ steps of SGD on $g$ using the above mentioned sampling procedure yields the desired bound.

We now ask how  large should $T$ be to obtain
    error of $\epsilon_\lambda$. By
    standard stochastic gradient descent guarantees, the output $h_A$
    satisfies
    \[
  \E[g(h_A)] \leq \E[g(h_{\h{\lambda}})]
  + \frac{\| h_{c} - h_{\h{\lambda}}) \|^2}{2 \eta} + \frac{\eta G^2 T}{2}.
    \]
Since the loss is strongly convex and $h_c$ is optimal for $\sL_{\h{\sC}}(h_{c})$,
\[
\sL_{\h{\sC}}( h_{\h{\lambda}}) - \sL_{\h{\sC}}(h_{c})  \geq \frac{\mu}{2} \| h_{c} - h_{\h{\lambda}}) \|^2.
\]
Furthermore, since $ h_{\h{\lambda}}$ is optimal for a $\lambda$-mixture,
\begin{align*}
 \lambda   \sL_{\h{\sD}_k}( h_{\h{\lambda}})
    + (1-\lambda) \sL_{\h{\sC}}( h_{\h{\lambda}})
    \leq \lambda   \sL_{\h{\sD}_k}( h_{c})
    + (1-\lambda) \sL_{\h{\sC}}( h_{c}).
\end{align*}
Hence,
\begin{align*}
\frac{\mu}{2} \| h_{c} - h_{\h{\lambda}}) \|^2
&\leq 
  \sL_{\h{\sC}}( h_{\h{\lambda}})
 -  \sL_{\h{\sC}}( h_{c}) \\
 & \leq \frac{\lambda}{1-\lambda} 
 (\sL_{\h{\sD}_k}( h_{c}) -  \sL_{\h{\sD}_k}( h_{\h{\lambda}})) \\
 & \leq  \frac{G \lambda}{1-\lambda} \| h_{c} - h_{\h{\lambda}}) \|.
\end{align*}
Therefore,
\begin{align*}
  \| h_{c} - h_{\h{\lambda}}) \| \leq \min \left(\frac{2G \lambda}{\mu(1-\lambda)}, R\right)
  \end{align*}
Combining the above equations, we get
\[
 \E[g(h_A)] \leq \E[g(h_{\h{\lambda}})]
  + \frac{1}{2\eta} \min \left(\frac{2G \lambda}{\mu(1-\lambda)}, R\right)^2
 + \frac{\eta G^2 T}{2}.
\]
Substituting the learning rate and setting $T = rm_k$ yields
\[
 \E[g(h_A)] \leq \E[g(h_{\h{\lambda}})] + 
 \frac{G}{\sqrt{rm_k}} \min \left(\frac{2G \lambda}{\mu(1-\lambda)}, R\right).
\]
Hence if $r \geq G^2 \max_{\lambda} \min \left(\frac{2G
}{\mu(1-\lambda)}, \frac{R}{\lambda}\right)^2$, the above
bound is at most $\sqrt{\frac{\lambda^2}{m_k}} \leq \epsilon_\lambda$.  Note
that for any $\lambda$,
\[
\min \left(\frac{2G }{\mu(1-\lambda)}, \frac{R}{\lambda}\right)
\leq \frac{2G}{\mu(1-\lambda)} 1_{\lambda < 1/2} + \frac{R}{\lambda} 1_{\lambda \geq 1/2} \leq \frac{4G}{\mu} + 2R,
\]
hence the theorem.

\subsection{Dapper pseudo-code}
\label{app:dapper}
We provide pseudo-code for the \textsc{Dapper} algorithm in Figure~\ref{fig:dapper}.
\begin{figure}[t]
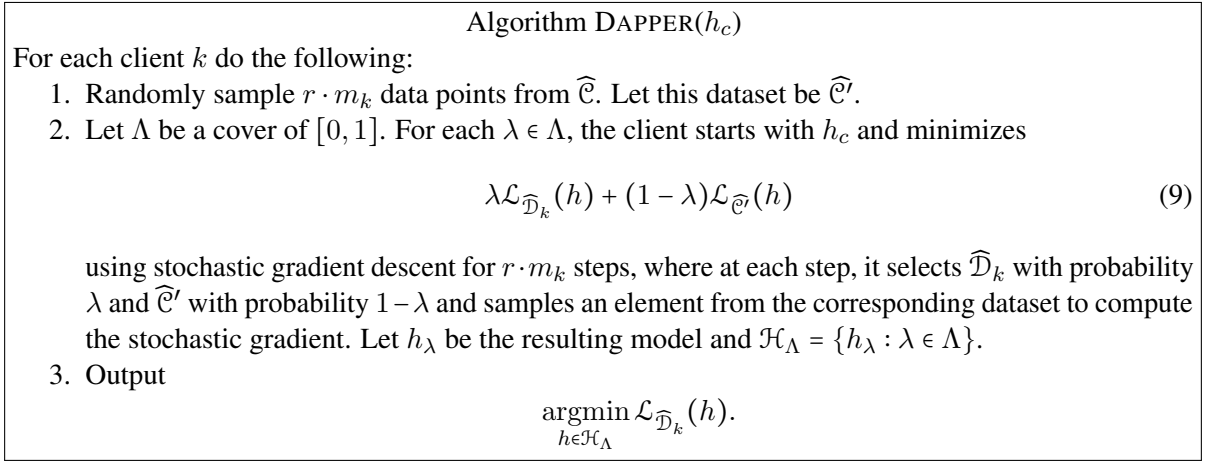

\begin{center}
\fbox{\begin{minipage}{\conf{0.45}\arxiv{0.95}\textwidth}
\begin{center}
Algorithm \textsc{Dapper}($h_c$)
\end{center}
 For each client $k$ do the following:
\begin{enumerate}
    \item Randomly sample $r \cdot m_k$ data points from
      $\h{\sC}$. Let this dataset be $\h{\sC'}$.
    \item Let $\Lambda$ be a cover of $[0,1]$. For each $\lambda \in
      \Lambda$, the client starts with $h_c$ and minimizes
    \begin{equation}
\label{eq:dapper}
    \lambda \sL_{\h{\sD}_k}(h) + (1-\lambda) \sL_{\h{\sC'}}(h)
    \end{equation}
    using stochastic gradient descent for $r \cdot m_k$ steps, where at each step, it selects $\h{\sD}_k$ with probability $\lambda$ and $\h{\sC}'$ with probability $1-\lambda$ and  samples an element from the corresponding dataset to compute the stochastic gradient. Let $h_\lambda$
    be the resulting model and $\sH_\Lambda = \{h_\lambda : \lambda
    \in \Lambda\}$.
\item Output 
\[
\argmin_{h \in \sH_{\Lambda}} \sL_{\h{\sD}_k}(h).
\]
\end{enumerate}
\end{minipage}}
\end{center}
\caption{Pseudocode for the \textsc{Dapper} algorithm.}
\label{fig:dapper}
\end{figure}

\subsection{Practical considerations}
\label{sec:practical}
While the above algorithm  reduces the amount 
of data transfer and is computationally efficient,
it may be vulnerable to privacy issues in applications such as FL.
To  overcome that, we propose several alternatives:
\begin{enumerate}
    \item Sufficient statistics: in many scenarios, instead of the actual data, we only need some sufficient statistics. For example in regression with $\ell^2_2$ loss, we only need the covariance matrix of the dataset from $\h{\sC}$.
    \item Generative models: for problems such as density estimation and language modelling, we can use the centralized model to generate synthetic samples from $h_c$ and use that as an approximation to $\h{\sC'}$. For other
    applications, one can train a GAN and send the GAN to the clients and the clients
    can sample from the GAN to create the dataset $\h{\sC'}$~\citep{augenstein2019generative}.
\item Proxy public data: if it is not feasible to send the actual user data, one could send proxy public data instead. While this may not be theoretically optimal, it will still avoid overfitting to the local data.
\end{enumerate}

\section{Supplementary material for model interpolation}
\subsection{Proof of Theorem~\ref{thm:model}}
\label{app:model}

Observe that
\begin{align*}
& \sum^p_{k=1} \frac{m_k}{m} \sL_{{\sD_k}}( (1-\h{\lambda}^*_k)\h{h}^*_c + \lambda^*_k \h{h}^{*}_{l,k})
 - \sum^p_{k=1} \frac{m_k}{m} \sL_{{\sD_k}}( (1-{\lambda^*_k}){h^*_c} + \lambda^*_k {h^{*}_{l,k}}) \\
 & \leq 2 \max_{h_c, \bar{\lambda}, \bar{h}_l}  
\left(\sum^p_{k=1} \frac{m_k}{m} \left( \sL_{{\sD_k}}( (1-{\lambda_k}){h_c} + \lambda_k {h_{l,k}})
- \sL_{\h{\sD}_k}( (1-{\lambda_k}){h_c} + \lambda_k {h_{l,k}}) \right)
\right).
\end{align*}
Changing one sample changes the above function by at most $1/m$. Thus by McDiarmid's inequality, with probability at least $1-\delta$,
\begin{align*}
& \max_{h_c, \bar{\lambda}, \bar{h}_l}  
\left(\sum^p_{k=1} \frac{m_k}{m} \left(\sL_{{\sD_k}}( (1-{\lambda_k}){h_c} + \lambda_k {h_{l,k}})
- \sL_{\h{\sD}_k}( (1-{\lambda_k}){h_c} + \lambda_k {h_{l,k}}) \right) \right)\\
& \leq \E \left[\max_{h_c, \bar{\lambda}, \bar{h}_l}  
\left(\sum^p_{k=1} \frac{m_k}{m}\left( \sL_{{\sD_k}}( (1-{\lambda_k}){h_c} + \lambda_k {h_{l,k}})
- \sL_{\h{\sD}_k}( (1-{\lambda_k}){h_c} + \lambda_k {h_{l,k}}) \right)\right) \right] + 2\sqrt{\frac{\log \frac{1}{\delta}}{m}}.
\end{align*}
Let $\bar{\sH}_l$ be the Cartesian product of hypothesis classes where the $k^\text{th}$
hypothesis class is the hypothesis applied to $k^\text{th}$ client. 
Let $\sH = \bar{\lambda} \sH_c + (1-\bar{\lambda}) \bar{\sH}_l$. 
Hence, by Talagrand's construction lemma and the properties of Rademacher complexity,
\begin{align*}
&  \E\left[\max_{h_c, \bar{\lambda}, \bar{h}_l}  
\left(\sum^p_{k=1} \frac{m_k}{m}\left( \sL_{{\sD_k}}( (1-{\lambda_k}){h_c} + \lambda_k {h_{l,k}})
- \sL_{\h{\sD}_k}( (1-{\lambda_k}){h_c} + \lambda_k {h_{l,k}}) \right) \right)\right]  \\
& \leq \R_{\sU + \bar{\sD}, \bar{m}}(\ell(\sH)) \\
& \leq  L \R_{\sU + \bar{\sD}, \bar{m}}(\sH)  \\
& \leq  L \R_{\sU, m}(\sH_c) + L \Rset_{\sD_k, m_k}(\bar{\sH}_l)  \\
& \leq  L \left( \R_{\sU, m}(\sH_c) + \sum^p_{k=1} \frac{m_k}{m} \R_{\sD_k, m_k} (\sH_l) \right),
\end{align*}
where the last inequality follows from the sub-additivity of Rademacher complexity.

\subsection{\textsc{Mapper} algorithms}
\label{sec:mapper}

We first show that this method of independently finding the local
models is sub-optimal with an example.
\begin{example}
Consider the following discrete distribution estimation problem. Let $\cH_c$ be
the set of distributions over $d$ values and let $\cH_l$ be the set of
distributions with support size $1$. For even $k$, let $\sD_k(1) =
\sD_{\text{even}}(1) = 1.0$ and for odd $k$, let $\sD_k(y) =
\sD_{\text{odd}}(y) = 1/d$ for all $1\leq y \leq d$. Let the number of
clients $p$ be very large and the number of samples per client a
constant, say ten. Suppose we consider the log-loss. 

The intuition behind this example is that since we have only one
example per domain, we can only derive good estimates for the local
model for even $k$ and we need to estimate the global model jointly
from the odd clients.  With this approach, the optimal solution is as
follows. For even $k$, $h_{l,k} = \sD_k$ and $\lambda_k = 1.0$. For
odd $k$, $\lambda_k = 0.0$ and the optimal $h_c$ is given by, $h_c =
\sD_{\text{odd}}$. If we learn the models separately, observe that, for each client
$\h{h}_{l,k}$ be the empirical estimate and $\h{h}_{c}$ would be $0.5
\cdot \sD_{\text{even}} + 0.5 \cdot \sD_{\text{odd}}$. Thus, for any
$\bar{\lambda}$, the algorithm would incur at least a constant loss
more than optimal for any $\lambda_k$ for odd clients.
\end{example}

\begin{figure}
\begin{center}
\fbox{\begin{minipage}{\conf{0.47}\arxiv{0.95}\textwidth}
\begin{center}
Algorithm \textsc{Mapper}
\end{center}
Randomly initialize $h^0_c$  and for $t = 1$ to $T$, randomly select a client $k$ and do the following.
\begin{enumerate}
    \item Let $\Lambda$ be a cover of $[0,1]$. For each $\lambda \in \Lambda$, 
let
    \begin{equation}
        \label{eq:mopper1}
   h_{l,k}(\lambda) = \argmin_{h_{l,k}} \sL_{\h{\sD}_k} (\lambda h_{l,k} + (1-\lambda) h^{t-1}_c).
    \end{equation}
    \item Find the best local model:
\begin{equation}
\label{eq:mopper2}
      \lambda^* = \argmin_{\lambda \in \Lambda} \sL_{\h{\sD}_k}
      (\lambda h_{l,k}(\lambda) + (1-\lambda) h^{t-1}_c).
\end{equation}
    \item Minimize the global model.
    \begin{equation}
    \label{eq:mopper3}
           h^{t}_c = h^{t-1}_c - \eta \nabla \sL_{\h{\sD}_k} (\lambda^*
           h_{l,k}(\lambda^*) + (1-\lambda^*) h^{t-1}_c).
    \end{equation}
\end{enumerate}
\item Let $h^T_c$ be the final global model. For each client $k$ rerun $1(a)$ and $1(b)$ to obtain the local model $h_{l,k}$ and the interpolation weight $\lambda_k$. 
\end{minipage}}
\end{center}
\caption{Pseudocode for the \textsc{Mapper} algorithm.}
\end{figure}

Since training models independently is sub-optimal in certain cases,
we propose a joint-optimization algorithm.  First observe that the
optimization can be rewritten as
\[
\min_{h_c} \sum^p_{k=1} \frac{m_k}{m} \min_{\bar{h}_l \bar{\lambda}}
\sL_{\h{\sD}_k}( (1-\lambda_k)h_c + \lambda_k h_{l,k}).
\]
 Notice that for a fixed $\lambda$ the function is convex in both
 $h^\ell$ and $h_c$. But with the minimization over $\lambda$, the
 function is no longer convex.  We propose algorithm \textsc{Mapper}
 for minimizing the interpolation models. At each round, the algorithm
 randomly selects a client. It then finds the best local model and
 interpolation weight for that client using the current value of the
 global model. It then updates the global model using the local model
 and the interpolation weight found in the previous step.

\section{Supplementary material for experiments}
\subsection{Synthetic dataset}
\label{app:toy}

Let $U$ be a uniform
distribution over $\sY$.  Let $P_k$ be a point mass distribution with
$P_k(k) = 1.0$ and for all $y \neq k$, $P_k(y) = 0$. 
For a client $k$, let $\sD_k$ be a mixture given by
\[
\sD_k = 0.5 \cdot P_{k\%4} + 0.25 \cdot U + 0.25 \cdot P_{k \%(d-4)},
\]
where $a \%b $, is $a$ modulo $b$.  Roughly, $U$ is the uniform
component and is same for all the clients, $P_{k\%4}$ is the cluster
component and same for clients with same clusters, and $P_{k\%(d-4)}$
is the individual component per client.

\subsection{EMNIST dataset}
\label{app:emnist}
\begin{table}
  \centering
  \caption{EMNIST convolutional model.}
  \begin{tabular}{c c c c c} 
  	Layer & Output Shape & \# Parameters & Activation & Hyperparameters  \\ \hline
  	Conv2d & (26, 26, 32) & 320 & ReLU & out\_chan=32;filter\_shape=(3, 3) \\
  	Conv2d & (24, 24, 64) & 18496 & ReLU & out\_chan=64;filter\_shape=(3, 3) \\
  	MaxPool2d & (12, 12, 64) & 0 & & window\_shape=(2, 2);strides=(2, 2) \\
  	Dropout & (12, 12, 64) & 0 & & keep\_rate=0.75 \\
  	Flatten & 9216 & 0 & & \\
  	Dense & 128 & 1179776 & ReLU & \\
  	Dropout & 128 & 0 & & keep\_rate=0.5 \\
  	Dense & 62 & 7998 & LogSoftmax & \\
  \end{tabular}
\label{tab:emnist_layers}
\end{table}

For the EMNIST experiments, we follow previous work for model architecture~\citep{jiang2019improving}. The full model architecture layer by layer is provided in Table~\ref{tab:emnist_layers}.
We train the model for 1000 communication rounds with 20 clients per round and use server side momentum, though one can use different
optimizers~\citep{jiang2019improving}. 
Evaluating the combined effect of our
approach and adaptive optimizers remains an interesting open
direction. 
Additionally, we apply logit smoothing with weight = 0.9 to the loss function to mitigate against exploding gradients experienced often in training federated models.

For all algorithms, the following  hyperparameters are the same: client batch size=20, num clients per round=20, num rounds=1000, server learning rate=1.0, server momentum=0.9.
 For the remaining hyperparameters, we perform a sweep over parameters and use the eval dataset to choose the best. The best hyperparameters after sweeping are as follows.
\begin{itemize}
    \item \textsc{FedAvg}: client num epochs=1, client step size=0.05.
    
    \item \textsc{Agnostic}: client num epochs=1, client step size=0.05, domain learning rate=0.05.  NIST documentation shows that the EMNIST dataset comes from two writer sources: \textsc{Census} and \textsc{High School}~\citep{grother1995nist}. We use these two distinct sources as domains. 
    
    \item \textsc{HypCluster}: client num epochs=1, client step size=0.03, num clusters=2. We determined that 2 was the optimal number of clusters since for larger numbers of clusters, all clients essentially mapped to just 2 of them.
    
    \item \textsc{Finetune}: client num epochs=5, client step size=0.01. We use the best baseline model as the pre-trained starting global model and finetune for each client.
    
    \item \textsc{Dapper}: client num epochs=1, client step size=0.04. Similar to \textsc{Finetune}, we use the best baseline model as the pre-trained starting global model. For each client, we finetune the global model using a mixture of global and client data. Given $m_k$ client examples, we sample $5 \cdot m_k$ global examples.

    \item \textsc{Mapper}: client num epochs=1, client step size=0.03 . We use the same global hyperparameters as the starting global model with the above local hyperparameters. As stated previously, we use the same architecture for both local and global models and at each optimization step, we initialize the local model using the global parameters.
\end{itemize}

Over the course of running experiments, we observed a peculiar behavior regarding the original client ordering provided in the EMNIST dataset. If the seen and unseen split is performed on the original ordering of clients, the model performance between seen and unseen clients is very different. In particular, unseen is almost $10\%$ absolute worse than seen. Looking through the NIST documentation, we found that the data was sourced from two distinct sources: \textsc{Census} and \textsc{High School}~\citep{grother1995nist}. The original client ordering is derived from the data partition they are sourced from and \textsc{High School} clients were all in one split, resulting in the difference in model performance. Thus, we determined that shuffling the clients before splitting was better.
\end{document}